\setlist[enumerate]{leftmargin=.5in}
\setlist[itemize]{leftmargin=.5in}
\crefname{hypothesis}{Hypothesis}{Hypotheses}
\author{Ido Ben-Shaul, Shai Dekel  \\
Department of Applied Mathematics\\
Tel-Aviv University\\
}
\title{Sparsity-Probe: \\
          Analysis tool for Deep Learning Models \thanks{Received by the editors May 14, 2021.}}
\author{Ido Ben-Shaul\thanks{Department of Applied Mathematics, Tel-Aviv University(\email{idobenshaul@mail.tau.ac.il}, \email{shai@tauex.tau.ac.il}).}
\and Shai Dekel\footnotemark[2]}
\DeclarePairedDelimiter\floor{\lfloor}{\rfloor}
\renewcommand\labelenumi{(\roman{enumi})}
\renewcommand\theenumi\labelenumi
\newcommand{\comment}[1]{}
\begin{document}
\maketitle

\begin{abstract}
We propose a probe for the analysis of deep learning architectures that is based on machine learning and approximation theoretical principles. Given a deep learning architecture and a training set, during or after training, the Sparsity Probe allows to analyze the performance of intermediate layers by quantifying the geometrical features of representations of the training set. We show how the Sparsity Probe enables measuring the contribution of adding depth to a given architecture, to detect under-performing layers, etc., all this without any auxiliary test data set.
\end{abstract}

\begin{keywords}
  Deep Learning, Approximation Theory, Representation learning, Wavelets, Sparsity, Explainability.
\end{keywords}

\begin{AMS}
  68T07, 68T30, 65D15, 65Y20, 65D40, 65D10
\end{AMS}

\section{Introduction}\label{sec:introduction}
Deep Neural Networks(DNN) have triumphantly improved benchmarks in a variety of different tasks. Remarkable works of architecture design \cite{lecun2015deeplearning, Resnets, Transformers}, optimization methods \cite{10.1214/aoms/1177729586, 2014arXiv1412.6980K, 2021arXiv210111075D}, and data mutation \cite{2015arXiv150203167I, JMLR:v15:srivastava14a, 2018arXiv180509501C, cheung2021modals} have been introduced and shown to empirically advance the fields of computer vision and natural language processing. Still, practitioners fail to justify the success of these models and lack the tools to test their real-world performance. Furthermore, while the basic notions of network architecture are understood \cite{lecun2015deeplearning, 10.5555/3086952}, it is difficult to assess the contribution of a certain layer of a trained model. Given these limitations, it is often unknown how to analyze a given architecture and how it can be improved. In many cases networks are treated as black boxes that lack explainability, and architectural experiments are conducted in a trial and error manner. An auxiliary test set is regularly presented as an approximation of the true dataset distribution \cite{10.5555/546466, 10.5555/993597, 10.5555/1643031.1643047} and used to quantify the model performance. 

Given the supervised classification setting, as presented in \cite{Ch}, any machine learning algorithm seeks to find a geometrical transformation that separates the samples of different categories and gathers the samples from matching categories. This notion has been prevalent in the field of Self-Supervised Learning(SSL) \cite{2020arXiv200607733G, simclr, 2021arXiv210303230Z}. In this paradigm, lacking the categorical information, a distorted image is generally compared to itself, to enforce the geometrical notion. 

In the Deep Learning setting, the category labels are often represented as a one-hot-encoding \cite{10.5555/1201987}, a vector in $\mathbb{R}^L$, where $L$ is the number of categories. 
Intermediate Features have been shown to learn incrementally higher-level features throughout the model layers \cite{erhan2009visualizing, olah2018the}. The output of the model's $k^\textrm{th}$ layer, as k grows, is expected to have a simpler structure, as the features contain more information that is class-specific. This concept corresponds to simpler mappings from incremental layers to the output labels. 

In the approximation theory approach, the sparsity of a function given some representation can be a robust method for evaluating its simplicity \cite{Elad, ED}. Functions in Deep Learning are generally not of a Sobolev nature, but rather in a general Besov Space \cite{ConstructiveApprox}. The study of adaptive, nonlinear approximation\cite{devore_1998}, allows the computation of this complexity score on such inherently non-smooth functions.

\begin{figure}[t]
\begin{center}
   \includegraphics[width=1.\linewidth]{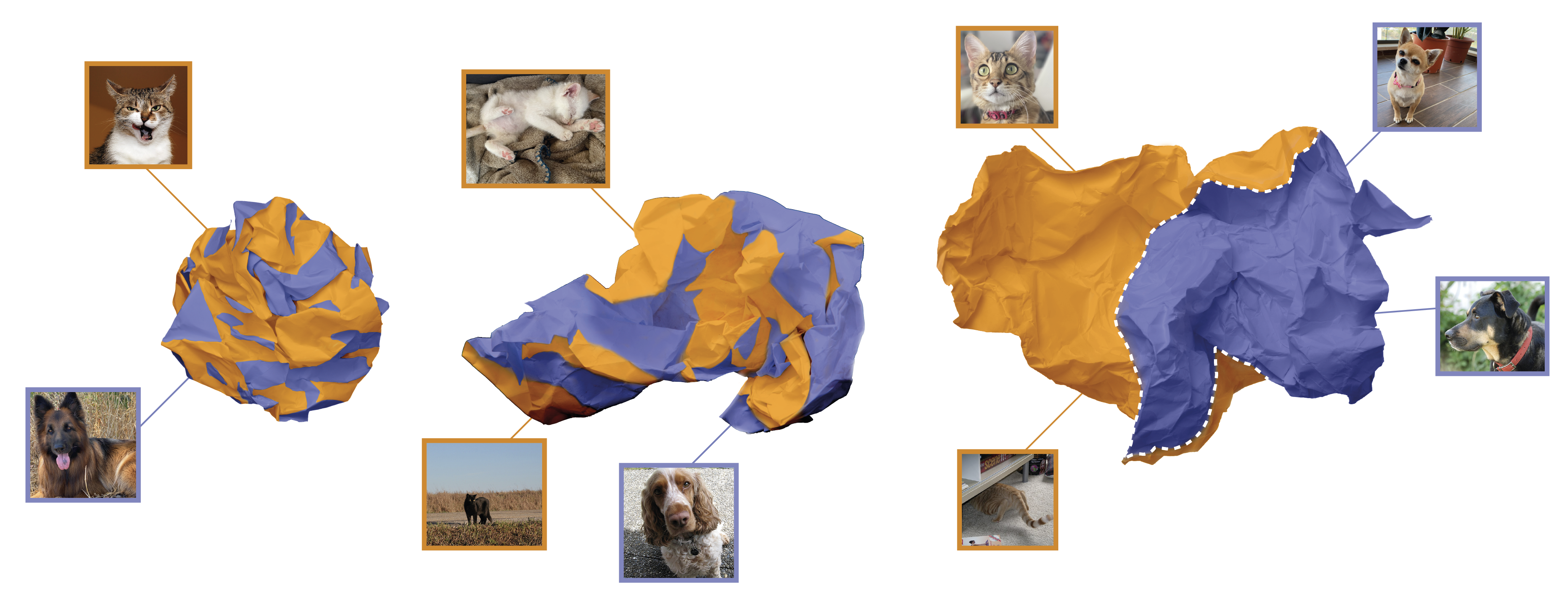}
\end{center}
   \caption{Demonstration of uncrumpling of the data representation through DL layers, as proposed in \cite{Ch}. 
    }
\label{folded_ball}
\label{fig:long}
\label{fig:onecol}
\end{figure}

\begin{figure}[t]
\begin{center}
   \includegraphics[width=1.\linewidth]{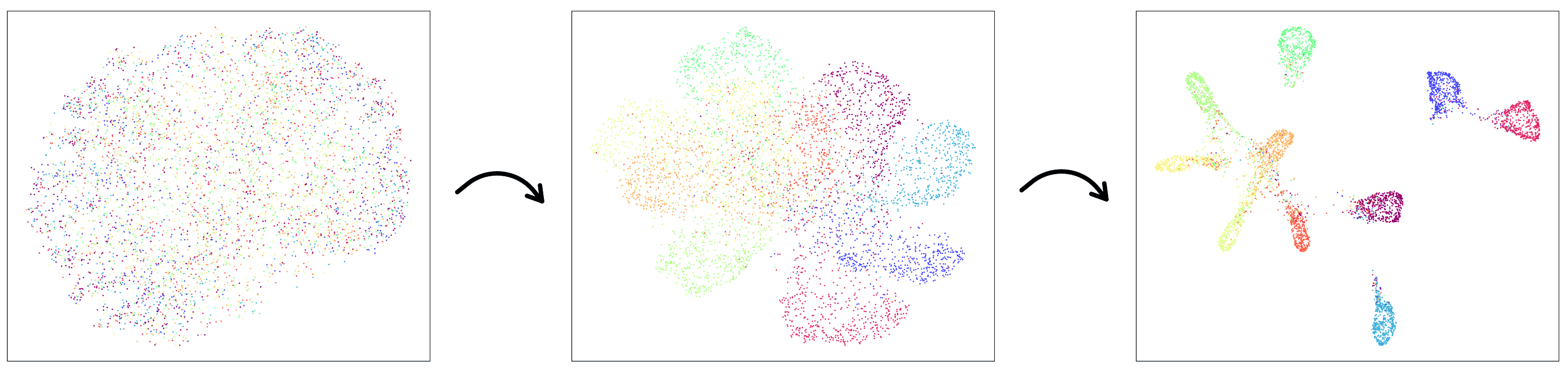}
\end{center}
   \caption{UMAP dimensionality reduction for the feature space of the input layer, the 2nd layer, and the 6th layer of a VGG-13 architecture well-trained on the CIFAR10 dataset. The improved clustering of the data representations is visually significant. 
    }
\label{mnist_umap}
\label{fig:long}
\label{fig:onecol}
\end{figure}

The main contributions of this paper are as follows:

\begin{enumerate}
  \item {\textbf{Sparsity-Probe}, a mathematically grounded tool for investigating the performance of intermediate model layers is introduced, using solely the train data, and model architecture(without an auxiliary test set).}
  \item {Extensive analysis is conducted showing the advantage of the Sparsity-Probe over  classical clustering indices.}
  \item {To support the theoretical basis, Sparsity-Probe is demonstrated on several known classification datasets.}
  \item {We present examples where the Sparsity-Probe is able to detect faulty or buggy architectures and by pinpointing  the problematic layer allows to fix them. }
\end{enumerate}

\section{Related Research and Concepts}\label{sec:Related_Research}

\subsection{Statistical Approach}\label{sec:StatisticalApproach}
Different statistical and mathematical theories that aim to explain the success of DL have been proposed. The authors of \cite{IB} provide an Information-Bottleneck theory in which the network is viewed as a Markov-Chain. The Mutual-Information is documented between the inputs and the labels throughout the layers. Other statistical approaches \cite{Bahri} propose equivalence between increasingly-wide networks and Gaussian Processes. 

Classic Machine Learning algorithms rely heavily on the geometry of the input feature for their success. Methods like Support Vector Machines \cite{cortes1995support}, KNN \cite{MacQueen1967SomeMF}, and  Random Forest \cite{Br1} focus on leveraging the geometry of the data for training. DL models are required as automatic feature engineering tools, when there is no clear clustering of the classes in the original feature space (e.g. pixel representation in computer vision). This leads us to believe that without a clear understanding of the geometry in the hidden layers, one cannot hope to understand the prediction quality of the model.

\subsection{Approximation-Theoretical Approach}\label{sec:ApproximationApproach}
Approximation Theory has given great importance in the field of Signal and Image Processing. Many methods have been offered for uncovering concealed relevant information from signals \cite{ConstructiveApprox, Da, Elad, 10.5555/1525499}. There has been a large amount of interest in grounding the theoretical basis of Neural Networks from an approximation theoretical perspective.  Many such works study the expressive power of deep feed-forward neural networks(FNN) for a certain target function $f\in\mathcal{B}$ networks, where $\mathcal{B}$ is a given Banach Space. In \cite{Shenz1}, the number of neurons is used to characterize the approximation rate for H\"{o}lder continuous functions using ReLU FNNs. Using both the width and depth of the network, \cite{Shenz3} achieve optimal approximation characterization of deep ReLU networks for smooth functions. Upper and lower bounds for the capacity needed to approximate Sobolev Functions are demonstrated in \cite{YAROTSKY2017103}. The authors were able to show that deep ReLU networks are able to more efficiently approximate smooth functions than shallow networks. General continuous functions are considered in \cite{2018arXiv180203620Y}, where optimality is shown for constant-width fully connected in terms of approximation rates. 

However, typically the authors assume that the input dataset can be represented as samples of a continuous or smooth function, such as functions in certain Sobolev spaces with sufficiently high smoothness index. Yet, evidence suggests that in most computer vision problems the input space is more correctly modelled by a discontinuous function. UMAP \cite{UMAP} is a nonlinear dimensionality reduction method that is commonly used to visualize data. In \ref{mnist_umap} we visualize the feature space of the input layer, the 2nd layer, and the 6th layer of a VGG-13 architecture trained on the CIFAR10 dataset. We sample 6000 random instances from the Train Set, and fit the UMAP reduction on their matching latent representation. It is obvious that the input space represents a discontinuous function. It is eyeopening to see that throughout the layers, the geometric clustering is apparent. Thus, in this work, we assume that the input dataset can be modeled as a function in some geometric Besov space, with relatively low smoothness index.

\subsection{Sparsity-Based Approach}\label{sec:Sparsity Approach}
Sparsity has been shown paramount for representing complex signals and giving insights into their nature e.g. Wavelet and Fourier transforms \cite{10.5555/1525499, Da, Elad}. It is intuitive to believe that DNNs employ sparsity methods to achieve successful representation learning. The Multi-Layer Convolutional Sparse Coding(ML-CSC) \cite{2016arXiv160708194P} provides a sparsity-based apprehension of Convolutional Neural Networks. Given a Dictionary $D$, it is shown that a ReLU Network forward pass is in fact equivalent to a layer-wise Nonnegative Sparse Coding pursuit, using Soft Nonnegative layered thresholding as a sparsity pursuit approximation. In \cite{2017arXiv170808705S}, a holistic pursuit is proposed along with a method for such Dictionary Learning. The discovery that concatenated layers are sparse with respect to a proposed dictionary is an important one and helps bridge the gap between the sparsity theory and empirically found neural network architectures. Our study differs from this approach in two critical aspects. The ML-CSC model proves that under certain conditions, a sparse Dictionary and representation vectors can be found, and propose methods for finding them. In our work, a general Post-hoc technique is shown to reliably enhance the explainability of any given trained model, along with its train dataset. This, in turn, does not involve learning a specific dictionary and representation, but rather assessing the quality of a given state. More importantly, we focus on supervised learning, where the sample categories(whether provided or not) are integral to approximate the model quality. Indeed, our premise relies on the fact that the sparsity should be centered around the mapping between the latent features and the labels. To strengthen this claim, consider a certain representation space. For a specific label assignment, this representation can be extremely well clustered, yet completely intertwined for a different label assignment.

\subsection{Linear and Kernel Probes}\label{sec:Separability}
\begin{definition}[Linear Separability]
Two sets $\Omega_1,\Omega_2\subset \mathbb{R}^n$ are linearly separable, if their convex hulls do not intersect.
\end{definition}
\begin{definition}[Non-Linear Separability]\label{sep_by_curve}
We say the sets $\Omega_1,\dots \Omega_k \subset \mathbb{R}^n$ are non-linearly separable if for every $\Omega_i$ there exists a domain $M_{i}\subset\mathbb{R}^{n}$ with a smooth boundary, such that $\Omega_{i}\subset{{M}_{i}}$ and $M_{i}\cap\bigcup\limits_{j\ne{i}}^{}{M_j}=\emptyset$.
\end{definition}
Classic machine learning algorithms like SVMs and CART\cite{Loh2011ClassificationAR} seek to find the best separation in the feature space between clusters of different classes. In the field of Representation Learning  \cite{Ben}, contrastive losses \cite{simclr,  supervised_contrastive, MoCo} aim to separate samples of different underlying category, whilst clustering samples of the same category. It is then of interest to quantify the wellness of separability between classes in the latent space.

Recent works propose to compute the linear separability of the intermediate layers \cite{LP, Separability_and_geometry}. Linear Separability is simple to define and compute, yet fails to grasp any separation which is not linear.

We present three synthetic toy datasets, with feature space of dimension 2, and
two outcome classes: Spiral, Circles, and Gaussian Quantiles(GQ) - see figure \ref{synthetic_dataset_fig}. It is clear
that the Linear Classifier methods cannot differentiate between the classes, as they are not
linearly separable. 
A more sophisticated measure of separability is considered in \cite{Kernel_Analysis}, by using radial basis kernel PCA to map the latent space to a different representation (selecting $d$ leading singular values), and measure linear separability in the projected dimension. This is problematic as $d$ is difficult to choose. In fact, if $d$ is large enough, linear separability becomes trivial, and so the authors consider small $d$. In general, one should prefer to measure the true separability in the given dimension, instead of measuring the linear separability in a projected (potentially lower-dimensional) space.

The separability of feature space in an intermediate layer is equivalent to the smoothness of the mapping to the sample labels, in the one-hot-encoding scheme. This work relies on \textbf{sparsity} to measure the smoothness of such functions. In \cite{Ch}, a DNN is compared to an uncrumpling of a high dimensional paper ball, such that every layer decouples between the classes incrementally. This concept is visualized in \ref{folded_ball}. In this visualization, at the final layer, we show an example of data that are well separated, yet clearly not linearly separable. We propose the Sparsity-Probe as a tool to quantify this type of separation. 

\begin{figure}[t]
\begin{center}
   \includegraphics[width=0.8\linewidth]{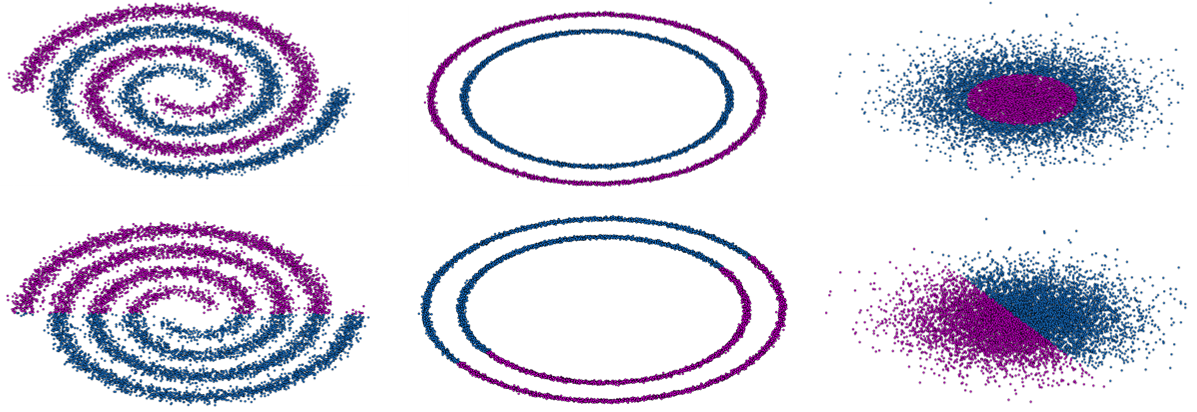}
\end{center}
   \caption{Three synthetic datasets which are all non-linearly separable: Top: Spiral, Circles, and Gaussian Quantiles datasets are shown. Bottom: Output of Linear Probes on these datasets. It is clear to see that a Linear Probe cannot capture the sparsity of such a function.}
\label{synthetic_dataset_fig}
\label{fig:long}
\label{fig:onecol}
\end{figure}

\section{Formulation}\label{sec:formulation}
\subsection{Preliminaries}
Evaluating the sparsity of a high dimensional function can be a daunting task. We base our notion on a classic approach that arrives from Image Compression \cite{DJL}, where Wavelets\cite{Da} are used to approximate a signal. The Geometric Wavelet \cite{10.1137/040604649}, was shown to extend this notion to the terms of adaptive non-linear approximation. The importance of function sparsity in terms of of signal processing and representations has been thoroughly emphasized in \cite{Elad} . An in-depth introduction to Geometric Wavelets is shown in \cite{ED}.

In order to use the appropriate functional tools, we first need to state the problem in a functional setting. Assume we have a square image input sample for the model of side length $N$. We normalize its values and unravel the image into a long vector of size $N^2$. At each output layer, we again normalize the representation and unravel them into a vector, these will be the inputs for our functions. We now need to address the labels. In the multiclass classification setting, a common representation for a label is the one-hot-encoding. We use these encodings to represent every label as a vector in $\mathbb{R}^{L}$, for ${L}$ - the number of classes. At each layer, the unraveled vector representations along with their vector label value, are considered as samples of a vector-valued function associated with the layer.   

The complete NN can be modeled as a function $f:\mathbb{R}^{N^2}\rightarrow \mathbb{R}^L$. At the same time, for each layer $k$, assume there exists a function $f_k: \mathbb{R}^{n_k} \rightarrow \mathbb{R}^L $, that maps the unraveled feature vector of the $k$-th layer, to the label vector representation in $\mathbb{R}^{L}$. Certainly, any  sample of the training set produces simultaneously a sample for each of these functions $f_k$. A NN $f$ is obviously well-trained, if for a given input $x$ with label $y$, $f(x)$ is close to the one-hot-encoding of $y$. Furthermore, for $x_1, ..., x_n$ of the same underlying class, a well trained network will aim to cluster their representations at the intermediate levels. The most common loss functions are built to do just this. Although this is the penalty that is minimized, we claim a well trained model also aims to cluster the intermediate layers, based on the GT labels, and so yields a more clustered representation to be passed to the following layer. To state this more thoroughly,  We argue that in a well trained network, each input space for such $f_k$ is more clustered in terms of class label, and so the function mapping it to the class label is smoother. Let us refine the concept of such clustering. Although we do wish that input samples of the same label be close in the $k^{th}$ output space, we need a measure that captures the possibility of several different clusters from a same class. This is a slightly different criterion than that of clustering. We are looking for a notion of good behavior, demanding that similar inputs be mapped to similar outputs. 

Using normalization (e.g.) of pixel values, we may assume that our samples ${x_{i}}$ are sampled from a convex domain $\Omega_0$, such as $[0,1]^{N^2}$. Our dataset is then of the form
\[
\left\{ {x_{i} ,f\left( {x_{i} } \right)} \right\}_{i\in I} \in \left( {\Omega_0,{\mathbb{R}}^{L}} \right).
\]
We approximate and quantify sparsity of high-dimensional non-smooth functions, using the Wavelet Decomposition of a Random Forest.

\begin{figure}[t]
\begin{center}
   \includegraphics[width=1.\linewidth]{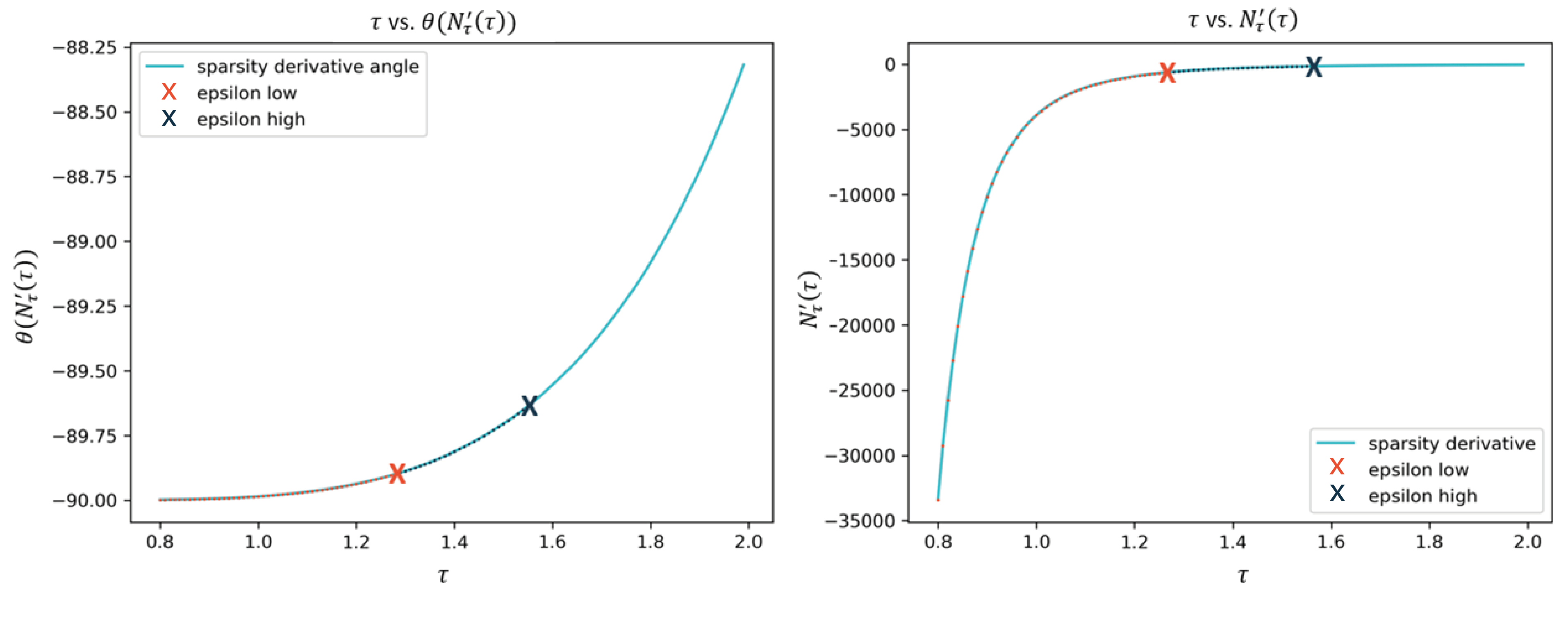}
\end{center}
   \caption{Numerical algorithm for estimating the transition index $\tau^*$. The two meta-parameters, $\varepsilon_{\textrm{low}}$ in red and, $\varepsilon_{\textrm{high}}$ in blue.}
\label{numerics_demonstration}
\label{fig:long}
\label{fig:onecol}
\end{figure}

\subsubsection{Wavelet Decomposition of Random Forest}

We begin with an overview of single trees. Decision Trees aim to find a sparse and efficient representation for the true, underlying function. At each stage, the algorithm seeks the optimal dividing hyperplane, w.r.t a Cost Function. 
The process is continued until a certain stopping criterion is fulfilled. The resulting domains are labeled as \textbf{leaves}. The general setting is as follows:

We begin with the convex domain $\Omega_0$ as the root of the decision tree. At a given stage, for node $\Omega \subset {\mathbb{R}}^{n}$, a cost minimizing partition by a hyperplane is found to split $\Omega$ into  ${\Omega }',{\Omega }''$, $\Omega'\cup {\Omega }''=\Omega$. In the Variance Minimization setting, the cost that is minimized is defined as

\begin{equation}
\begin{aligned}{
\sum\limits_{x_{i} \in {\Omega }'} \left\| {f({x_{i})} -\vec{{E}}_{\Omega'
}} \right\|_{l_{2}({\mathbb{R}}^{L})}^{2} + \sum\limits_{x_{i} \in {\Omega }''} \left\| {f({x_{i})} -\vec{{E}}_{\Omega''
}} \right\|_{l_{2}({\mathbb{R}}^{L})}^{2}}
\label{dis-eq1}
\end{aligned}
\end{equation}

Where: 
\begin{eqnarray*} \label{C_omega}
\vec{{E}}_{\hat{\Omega}} :=\frac{1}{\# \left\{ {x_{i} 
\in \hat{\Omega }} \right\}}\sum\limits_{x_{i} \in \hat{\Omega }} {f\left( {x_{i} 
} \right)}  \qquad \vec{{E}}_{\hat{\Omega}}\in{\mathbb{R}^{L}}
\end{eqnarray*}

The Random Forest \cite{Br1} algorithm is a significant generalization of the single Decision Tree which is a locally `greedy' algorithm. Several decision trees are constructed over random subsets of the training data, and inference is applied through a voting mechanism over the trees. For any point $x\in \Omega_{0} $, the approximation associated with the $j^{th}$ tree, $\tilde{{f}}_{j} \left( {x} \right)$, is computed by finding the leaf $\Omega \in {\mathcal{T}}_{j} $ in which $x$ is 
contained and assigning $\tilde{{f}}_{j} (x):=\vec{E}_\Omega$, where $\vec{E}_{\Omega } $ is the 
corresponding mean value of leaf $\Omega $ computed during training.
The approximate value of a point $x\in \Omega_{0}$ is then given by
\[
\tilde{{f}}\left( x \right)=\frac{1}{J}\sum\limits_{j=1}^J{\tilde{{f}}_{j} 
\left( x \right)}.
\]

We are now ready to define the \textbf{Geometric Wavelet Decomposition of a Random Forest}. Let $\mathcal{T}$ be a decision tree for function $f$. First we denote the `father' wavelet as the constant function $\psi_{\Omega_0}:= \vec{E}_{\Omega_0}$. In going further, let ${\Omega }'$ to be a child of $\Omega $ in tree ${\mathcal T}$, i.e. ${\Omega }'\subset \Omega $ and ${\Omega }'$ was 
created by a partition of $\Omega $. The wavelet $\psi_{{\Omega }'}:\mathbb{R}^{n}\to \mathbb{R}^{L}$ is defined as 
\[
\psi_{{\Omega }'}(x) :={\rm {\bf 1}}_{{\Omega }'}(x) \left( {\vec{{E}}_{{\Omega 
}'} -\vec{{E}}_{\Omega } } \right),
\]
where ${\bf 1}_{\Omega'}$ is the indicator function of $\Omega'$. The wavelet norm is given by 
\begin{equation}
\label{eq5}
\left\| {\psi_{{\Omega }'} } \right\|_{L_2} =\left\| {\vec{{E}}_{{\Omega }'} -\vec{{E}}_{\Omega 
}} \right\|_{l_{2}({\mathbb{R}}^{L})} \left| \Omega' \right|^{1/2}
\end{equation}
Under certain mild conditions on a decision tree $\mathcal{T}$, the following holds \cite{ED}:
\begin{equation}
\label{eq3a}
f=\sum\limits_{\Omega \in {\mathcal T}} {\psi_{\Omega } } 
\end{equation}
We can then define the wavelet decomposition of a RF as: 
\begin{equation}
\label{eq6}
\tilde{{f}}\left( x \right)=\frac{1}{J}\sum\limits_{j=1}^J {\sum\limits_{\Omega \in 
{\mathcal T}_{j} } {\psi_{\Omega } \left( x \right)} } 
\end{equation}

With the Geometric Wavelet Decomposition of a Random Forest at hand, we can define the notion of sparsity. 

\subsubsection{$\tau$-Sparsity}
We define the $\tau$-Sparsity of tree ${\mathcal{T}}$ and parameter $0<\tau<2$, 
\begin{equation}
\label{eq12}
N_{\tau} \left( {f,{\mathcal T}} \right)=\left( {\sum\limits_{\Omega \ne \Omega_0, \Omega \in {\mathcal 
T}} {\left\| {\psi_{\Omega } } \right\|_{2}^{\tau } } } \right)^{1/\tau }:=\|\{  \|\psi_\Omega\|_2   \}_{\Omega\in\mathcal{T}} \|_{l_\tau}
\end{equation}
It is easy to see that:
\begin{equation}
\label{limit_sparsity}
\lim _{\tau \rightarrow 0} N_{\tau}(f, \mathcal{T})^{\tau}=\left\{\# \Omega \in \mathcal{T}: \|\psi_{\Omega}\|_2 \neq 0\right\}:=\|\{  \|\psi_\Omega\|_2   \}_{\Omega\in\mathcal{T}} \|_{0}
\end{equation}
This coincides with sparsity described in \cite{Elad}. Let us further denote the $\tau $-sparsity of a forest ${\mathcal{F}}$, by

\begin{equation}
\label{eq1}
 N_{\tau} \left( {f,{\mathcal F}} \right)  :=\frac{1}{J}\left( {\sum\limits_{j=1}^J {N_{\tau } \left( {f,{\mathcal T}_{j} } 
\right)^{\tau }} } \right)^{1/\tau } 
= \frac{1}{J}\left( 
{\sum\limits_{j=1}^J {\sum\limits_{\Omega \ne \Omega_0, \Omega \in {\mathcal T}_{j} } {\left\| {\psi 
_{\Omega } } \right\|_{p}^{\tau } } } } \right)^{1/\tau }. 
\end{equation}
The $\|.\|_\tau$ norm is monotonically non-increasing in $\tau$.

\subsection{$\tau$-Sparsity Motivation}

\subsubsection{Smooth-curve Separator in $\mathbb{R}^2$}
We begin with a lemma that gives a bound on the $\tau$-sparsity of a smooth curve separator in the binary classification setting, with features in $\mathbb{R}^2$, and a dyadic non-adaptive tree.
\begin{lemma} 
\label{tau-example-2d}
Let $f(x)={\bf 1}_{{\tilde{\Omega}}}(x)$, where $\tilde{\Omega}\subset{{[0,1]}^2}$ is a compact domain with a smooth boundary. Then, for $1<\tau<2$ , $N_{\tau} \left( {f,{{\mathcal T}_I}} \right)<\infty$, where ${{\mathcal T}_I}$ the tree with isotropic dyadic partitions, creating dyadic cubes of area $2^{-2k}$ at level $2k$.
\end{lemma}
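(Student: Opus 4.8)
The plan is a scale-by-scale estimate: bound the contribution of each level of $\mathcal{T}_I$ to the $\ell_\tau$-sum $\sum_{\Omega\ne\Omega_0}\|\psi_\Omega\|_2^\tau$ and show the resulting geometric series converges precisely because $\tau>1$. Two facts drive everything. First, since $f$ takes values in $\{0,1\}$, every cell average satisfies $\vec{E}_\Omega=|\Omega\cap\tilde{\Omega}|/|\Omega|\in[0,1]$, so for any cell $\Omega$ with parent $\Omega'$ we have $|\vec{E}_\Omega-\vec{E}_{\Omega'}|\le1$, and by \eqref{eq5} the trivial bound $\|\psi_\Omega\|_2\le|\Omega|^{1/2}$ holds. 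Second, $\psi_\Omega\ne0$ forces $\Omega$ or its parent to meet $\partial\tilde{\Omega}$: a dyadic cell that misses $\partial\tilde{\Omega}$ is connected and disjoint from the boundary, hence lies entirely in $\tilde{\Omega}$ or entirely in its complement, so $f$ is constant on it; if both $\Omega$ and $\Omega'$ miss the boundary then $\vec{E}_\Omega=\vec{E}_{\Omega'}$ and the wavelet vanishes.

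Next I would count. The cells of $\mathcal{T}_I$ at tree-level $m$ are axis-parallel rectangles of area $2^{-m}$ and aspect ratio $1$ or $2$, hence of diameter $\le\sqrt5\,2^{-m/2}$. Because $\tilde{\Omega}$ has a smooth boundary, $\partial\tilde{\Omega}$ is a compact $1$-manifold of finite total length $\ell$; chopping it into subarcs of diameter at most the short side of a level-$m$ cell and noting that each such subarc meets only $O(1)$ cells of the (congruent-rectangle) grid at that level shows that at most $C\ell\,2^{m/2}$ level-$m$ cells meet $\partial\tilde{\Omega}$, for an absolute constant $C$. Since $\mathcal{T}_I$ has branching factor $2$, the number of level-$m$ cells $\Omega$ for which $\Omega$ \emph{or} its parent meets the boundary, and hence the number of level-$m$ cells with $\psi_\Omega\ne0$, is also $O(\ell\,2^{m/2})$.

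Combining the two estimates, the level-$m$ contribution is at most $O(\ell\,2^{m/2})\cdot(2^{-m/2})^{\tau}=O(\ell\,2^{m(1-\tau)/2})$, so
\[
N_\tau(f,\mathcal{T}_I)^{\tau}=\sum_{\Omega\ne\Omega_0,\,\Omega\in\mathcal{T}_I}\|\psi_\Omega\|_2^{\tau}\ \le\ C'\ell\sum_{m\ge1}2^{m(1-\tau)/2}\ <\ \infty,
\]
the series converging because $1-\tau<0$; taking $\tau$-th roots gives $N_\tau(f,\mathcal{T}_I)<\infty$. I expect the only real work to lie in the geometric counting step, i.e. turning ``$\partial\tilde{\Omega}$ has finite length'' into the uniform-in-$m$ bound $O(2^{m/2})$ on the number of boundary-hitting cells (a Minkowski-content / box-counting estimate), together with the minor nuisance of the mildly anisotropic odd levels; everything else is bookkeeping. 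Note that the hypothesis $\tau<2$ is not used beyond keeping $\tau$ in the admissible range of the $\tau$-sparsity functional — what matters is $\tau>1$.
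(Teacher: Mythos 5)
Your argument is correct and follows essentially the same route as the paper's proof: nonzero wavelets occur only on cells whose own or parent cell meets $\partial\tilde{\Omega}$, the trivial bound $\|\psi_\Omega\|_2\le|\Omega|^{1/2}$ applies there, an $O(2^{m/2})$ count of boundary-hitting cells per level gives a geometric series converging exactly when $\tau>1$. The only difference is cosmetic: you justify the counting bound via finite arc length and a covering (box-counting) argument, whereas the paper splits $\partial\tilde{\Omega}$ into finitely many monotone segments between points where the tangent is axis-aligned; both yield the same $C(\tilde{\Omega})2^{m/2}$ estimate, and your version is arguably slightly more robust since it only uses finiteness of the length of the smooth compact boundary.
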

\begin{lemma}
\label{existence-tau-index}
Let $f(x)=\sum\limits_{k=1}^K {{c}_{k}}{\bf 1}_{{B}_{k}}(x)$, where ${B}_{k}\subset {\Omega_0}$ are disjoint cubes with sides parallel to main axes, ${c}_{k}\in\mathbb{R}$. Then, there exists an adaptive tree $\mathcal{T}$, such that for every $0<\tau<2$, $N_{\tau} \left( {f,{\mathcal T}} \right)<\infty$.
\end{lemma}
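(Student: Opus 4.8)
\emph{Proof idea.} The plan is to exploit the fact that $f$ is piecewise constant with jump set contained in finitely many axis-parallel hyperplanes, so that a \emph{finite} adaptive tree represents $f$ exactly; once the tree is finite, $N_\tau(f,\mathcal{T})<\infty$ for every $\tau>0$ is immediate. Write each cube as $B_k=\prod_{i=1}^{n}[a_i^k,b_i^k]$ and consider the finite collection of at most $2nK$ coordinate hyperplanes $\{x_i=a_i^k\}$ and $\{x_i=b_i^k\}$, $1\le i\le n$, $1\le k\le K$. These hyperplanes subdivide $\Omega_0$ into a finite grid of at most $(2K+1)^{n}$ convex cells, each cell being the intersection of $\Omega_0$ with an axis-parallel box whose faces lie on the listed hyperplanes.

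First I would build an adaptive tree $\mathcal{T}$ realizing this grid: starting from the root $\Omega_0$, at any current node $\Omega$ through whose interior one of the listed hyperplanes still passes, split $\Omega$ along that hyperplane into $\Omega',\Omega''$, and stop a branch once no listed hyperplane meets its interior. Since every split strictly increases the number of distinct grid cells that have been separated and there are finitely many such cells, the procedure terminates, producing a tree with finitely many nodes whose leaves are exactly the grid cells.

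Next I would check that $f$ is constant on every leaf. Because the grid boundaries contain all $2n$ faces of every $B_k$, any grid cell is either entirely contained in a (by disjointness, unique) $B_k$, or entirely contained in $\Omega_0\setminus\bigcup_k B_k$; hence $f\equiv c_k$ or $f\equiv 0$ on that leaf. Therefore $\vec{E}_\Omega$ equals the value of $f$ on $\Omega$ for every leaf $\Omega$, and telescoping the wavelets along the path from $\Omega_0$ down to the leaf containing a point $x$ gives $\sum_{\Omega\ni x}\psi_\Omega(x)=\vec{E}_{\mathrm{leaf}(x)}=f(x)$; that is, $f=\sum_{\Omega\in\mathcal{T}}\psi_\Omega$ with only finitely many nonzero summands, which is the instance of \eqref{eq3a} we need.

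Finally, since $\mathcal{T}$ has finitely many nodes and each $\|\psi_{\Omega'}\|_2=\|\vec{E}_{\Omega'}-\vec{E}_{\Omega}\|_{l_2(\mathbb{R}^L)}|\Omega'|^{1/2}<\infty$ (here $L=1$), for every $0<\tau<2$
\[
N_\tau(f,\mathcal{T})^{\tau}=\sum_{\Omega\ne\Omega_0,\ \Omega\in\mathcal{T}}\|\psi_\Omega\|_2^{\tau}\ \le\ \big(\#\mathcal{T}\big)\,\max_{\Omega\in\mathcal{T}}\|\psi_\Omega\|_2^{\tau}\ <\ \infty,
\]
which proves the lemma; in fact the same bound holds for all $\tau>0$, and by \eqref{limit_sparsity} it tends to $\#\{\Omega\in\mathcal{T}:\psi_\Omega\ne0\}$ as $\tau\to0$. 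The only step needing genuine care is the tree construction — making precise that a binary, axis-parallel splitting tree can realize the full hyperplane grid and that its leaves refine the partition $\{B_k\}_{k=1}^{K}\cup\{\Omega_0\setminus\bigcup_k B_k\}$; everything afterwards follows from finiteness. Note the contrast with Lemma~\ref{tau-example-2d}: there the jump set is a genuinely curved boundary, no finite tree captures it, and only $\tau>1$ survives, whereas here the axis-aligned jump set is captured exactly and all $0<\tau<2$ work.
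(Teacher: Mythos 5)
Your proposal is correct and follows essentially the same route as the paper: build an adaptive tree whose axis-parallel splits coincide with the faces of the cubes $B_k$, so that $f$ is constant on the resulting leaves and only finitely many wavelets $\psi_\Omega$ are nonzero, whence $N_\tau(f,\mathcal{T})<\infty$ for every $0<\tau<2$. The only difference is cosmetic — you realize the full grid of face hyperplanes at once, whereas the paper first isolates each $B_k$ in its own node and then carves it out with a few cuts — and if anything your construction spells out the termination and constancy-on-leaves details more explicitly than the paper does.
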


Based on lemma \ref{existence-tau-index} we can define the $\tau^*$, as the transition index
\begin{definition}[transition $\tau$-index]
We define:
\begin{equation} \label{tau-star-def}
\tau^*:=\inf_{0<\tau<2} \left\{\tau | N_{\tau } ( {f,{\mathcal F}} )<\infty \right\},
\end{equation}
where  $N_{\tau } \left( {f,{\mathcal F}} \right)$ is given in \eqref{eq1}.We notice that due to the monotonicity of the $N_{\tau} \left( {f,{\mathcal F}} \right)$, the transition index is the smallest $\tau$ such that the $\tau$-sparsity is finite.
\end{definition}

It was shown in \cite{ED}, \cite{FSA}, that under certain mild conditions, the Forest Besov-Smoothness of 
the function is equivalent to it's $\tau$-sparsity.

\subsection{Numerical estimation of $\tau^*$}
Since $\tau^*$, defined in \eqref{tau-star-def}, is a complicated transition index, the task of estimating it is highly non trivial. Here we propose a more robust method then the methods proposed in \cite{ED}, \cite{FSA}. First, we use (\ref{eq1}), to create a series of samples $N_{\tau_k } \left( {f,{\mathcal F}} \right)$, for a set of discrete samples $\{ \tau_k \}$, $0<\tau_k<2$. We then approximate at these samples numerical derivatives
\[
N'_\tau(\tau_k):=\frac{\partial N_\tau(f,\mathcal F)}{\partial \tau}(\tau_k).
\]
We use the angles of the derivatives
\[
\theta(\tau_k):=\arctan(N'_\tau(\tau_k)), 
\]
to estimate the transition index $\tau^*$. Now, observe that the transition index is associated with an `infinite' derivative, or equivalently an angle of $-\pi/2$. To this end, we use two meta parameters: $\varepsilon_{\textrm{low}}, \varepsilon_{\textrm{high}}$, and define
\[
S:=\{\tau_k: -\frac{\pi}{2}+\varepsilon_{\textrm{low}}\le \theta(\tau_k) \le -\frac{\pi}{2}+\varepsilon_{\textrm{high}} \}.
\]
We now define the transition index by $\tau^*:=\frac{1}{|S|}\sum_{\tau_k\in S}\tau_k$. A demonstration is shown in Figure \ref{numerics_demonstration}.


\section{Experiments}\label{sec:Experiments}
The experiments throughout the paper use  $\varepsilon_{\textrm{low}}=0.1$, $\varepsilon_{\textrm{high}}=0.4$. We use a three trees, with maximal depth $15$. The sparsity-probe is deployed on the input layer, and all intermediate model layers. We do not test the sparsity at the final model layer. For each dataset, each model is trained with three different initialization seeds, and approximated throughout the layers.
\subsection{Sparsity Probe on the synthetic datasets}

We saw that the Linear Probes cannot quantify the separability of the synthetic datasets presented in Figure \ref{synthetic_dataset_fig}. We show the $\tau^*$ estimate on these datasets. Lemma \ref{tau-example-2d} that provides a bound for sparsity using a non adaptive decision tree, suggests that when using adaptive tree partitions, we should expect a  sparsity $\tau^*\leq1$. The numerical estimates for the $\tau^*$ values are reported in Table \ref{tab:Table1}. As the distance between the classes in the Circles dataset is largest, the $\tau^*$ is indeed the lowest. The Spiral dataset has more distance between the classes than in the Gaussian Quantiles, and so its $\tau^*$ is slightly lower. 

\begin{table}\centering
\caption{$\tau$-Sparsity of synthetic datasets. Although the datasets are completely separable by a smooth curve, due to its non-linearity - the Linear Probes cannot quantify this separability.}
\bigskip
\label{tab:Table1}
\begin{tabular}{@{}ccccc@{}}
\toprule
    Dataset&  $\tau^*$ \\
\midrule
   Spiral&   \textbf{0.98}& \\
   Circles& 	\textbf{0.93}&\\
   GQ&   \textbf{0.99}&\\
\bottomrule
\end{tabular}
\end{table}

\subsubsection{Comparison of the Sparsity Probe transition index to Clustering indices}
Although clustering is closely related to sparsity, there are caveats that yield it inaccurate when trying to evaluate the separation of a latent space:
\begin{enumerate}
  \item [(i)]  Most clustering methods do not deal well with non linearly separated data. KMeans, for example cannot handle well the synthetic datasets of Figure \ref{synthetic_dataset_fig}, and its outcome will be similar to those of the Linear Probes. More advanced Hierarchical Clustering methods can be used to improve this issue in some scenarios.  
  \item [(ii)] When we look at the latent spaces of deep intermediate layers, we expect to observe well-separated clusters. However, in the shallow layers this is simply not true and there are scenarios where the geometry of shallow layers is too vague for most clustering indices. 
  \comment{
  \item [(iii)] We introduce the following dataset to show motivation for the clustering metrics comparison. The dataset consists of 10,000 points split into 20 clusters, each with a random std and of a random binary label. We then run a simple KMeans algorithm with $k=2$, and compare the results across various statistics in table \ref{tab:Table2}. Other than the Fowlkes-Mallows Index and the Silhouette scores - the statistics are close to zero. This is because these clustering statistics penalize for a cluster being split into several different clusters. The Fowlkes-Mallows Index and the Silhouette score account for inter-cluster similarity. The $\tau^*$ on this dataset is $\textbf{0.65}$. This can be compared with the scores of \ref{tab:Table1}. As this dataset is much more separated than any of the three synthetic datasets, the $\tau^*$ is indeed far lower. Since we are using trees of finite depth, the $\tau^*$ approximates the divergence to $0$ on these datasets. }
  \end{enumerate}

\comment{
\begin{table}\centering
\caption{Clustering indices of the example with 20 clusters in dimension 2}\\
\bigskip
\label{tab:Table2}
\begin{tabular}{@{}ccccc@{}}
\toprule
    Metric&  Value \\
\midrule
    Rand Index - adjusted for chance& 0.0001 \\
    Adjusted Mutual Information& 7.21e-05\\
    Homogeneity&   1.28e-15\\
    Completeness&   1.28e-15\\
    Fowlkes-Mallows Index&  0.49\\
    Silhouette Score&   0.44 \\
\bottomrule
\end{tabular}
\end{table}}

\subsection{Sparsity-Probe on Neural Networks}
In order to fully approximate the true functional setting of Deep Neural networks, which are known to be unstable\cite{Robust}, we train each network with 3 different seeds, and approximate $\tau^*$ for each of the intermediate layers. We then set the mean of the 3 sparsity index estimates as the estimated index. In some of the figures we also render the certainty intervals of the graphs of $\tau^*$. 

\begin{definition}[$\alpha$-Score]
A closely related definition to $\tau$-sparsity is the $\alpha$-score. For the critical $0<\tau^*<2$, the $\alpha^*$-index is defined as

\begin{equation}
\label{alpha-score}
\alpha^* := \frac{1}{\tau^*} - \frac{1}{2} > 0
\end{equation}
\end{definition}

In the following sections we report the critical $\alpha$-score found by the algorithm. Since they have an inverse relation, we are looking for the highest $\alpha$-score.

\subsubsection{Analyzing the contribution of adding depth to a network}\label{cifar_vgg}
Suppose we wish to create a model for the CIFAR10 dataset, and decide to use a VGG\cite{VGG} architecture. It is natural to ask - how deep should our model be? Too few layers and the accuracy could be low, too many and the model capacity will be too high, and lead to overfitting. Furthermore, can we actually quantify the contribution of each added layer to the outcome? We train the VGG$\{$ 13, 16, 19$\}$ architecture variants on CIFAR10 for 100 Epochs and estimate the sparsity at the output of every MaxPooling layer and of every Convolutional layer beginning from the 5th layer. For VGG$19$, we also report $\alpha^*$ after 50 Epochs. Results are shown in \ref{vgg_cifar10_comparison_fig}. Observe that the Sparsity Probe reveals certain interesting properties of the different architectures:
\begin{enumerate}
    \item As expected, in general, deeper architectures have the capacity to increase the sparsity and this correlates with the accuracy testing results.
    \item However, we see a certain sparsity saturation phenomena with the VGG16 architecture, where the added layers of VGG19 do not drastically improve sparsity. This correlates with the testing results, where both architectures produce similar accuracy. 
    
    \item When comparing the VGG19 trained on $50$ epochs with the same architecture trained for $100$, it is apparent that the separability improves, most noticeably towards the end of the network.
    
    \item We can also see that the MaxPooling layers usually cause a dip in $\alpha^*$. This can be attributed to the fact that max-pooling is a non-learned, coarse discretization layer. 
\end{enumerate}

\begin{figure}[t]
\begin{center}
   \includegraphics[width=1.\linewidth]{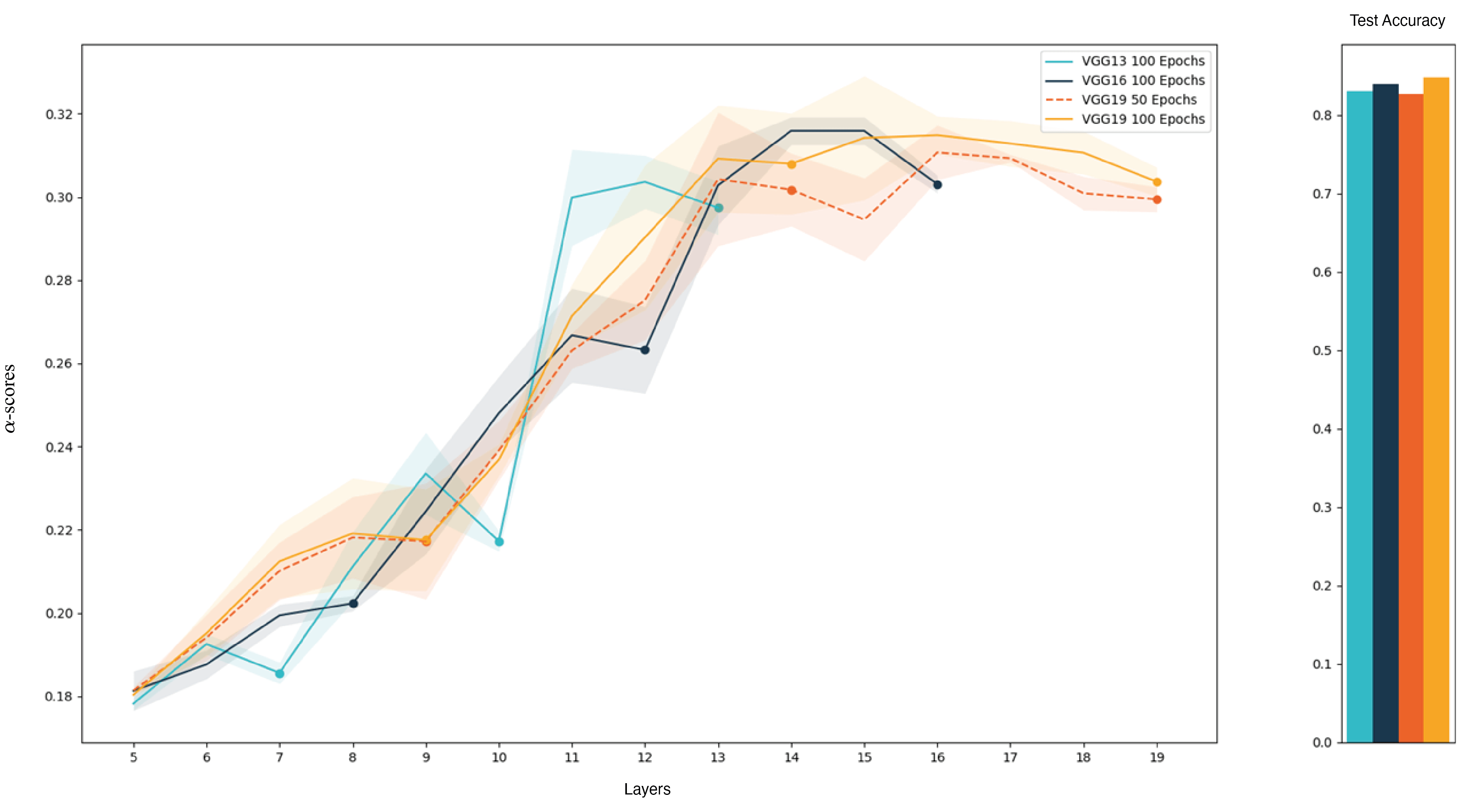}
\end{center}
   \caption{Comparing VGG-$\{$13, 16, 19$\}$ on the CIFAR-10 datasets for $100$ Epochs. VGG19 is also shown after $50$ Epochs. Layers are either MaxPooling(Markers) or Convolutional. We omit the first 5 layers of each network.}
\label{vgg_cifar10_comparison_fig}
\label{fig:long}
\label{fig:onecol}
\end{figure}

\subsubsection{Using the Sparsity Probe to compare different architectures}\label{fashion_mnist}
We wish to asses the probe's ability to analyze problematic architectures that do not perform well on the testing data, e.g, that are not able to generalize \cite{FANTASTIC} (recall that our probe only uses the training data). We report our results on the Fashion-MNIST\cite{fmnist} dataset. As mentioned before, modern architectures in Machine Learning consist of two parts - the feature extractor and the classifier. The main role of the feature extractor is to transform the latent features into more easily separable latent embeddings. Convolutional Layers(ConvLayers) are used to learn increasingly complicated features across layers, by using spatial relations. Suppose one were to create an entirely different architecture that alternates between ConvLayers and linear layers. Every output of a linear layer is transformed into a square as an input into the ConvLayer. Looking at the $\alpha^*$ in Fig. \ref{good_bad_fig}(Left), we see that for the dark-blue line, the scores decrease throughout the layers. The light-blue line reports a model trained on a Resnet18\cite{Resnets} variant, with smaller channel sizes, and inputs of gray-scale images. $\alpha$-scores are measured at the end of every residual connection, and in edge layers.  A general theme which is shown is that the true rise of the $\alpha^*$ happens towards the end of the network. This can be explained by the strength of the gradients that arrive at the earlier layers. 

\begin{figure}[t]
\begin{center}
   \includegraphics[width=1.\linewidth]{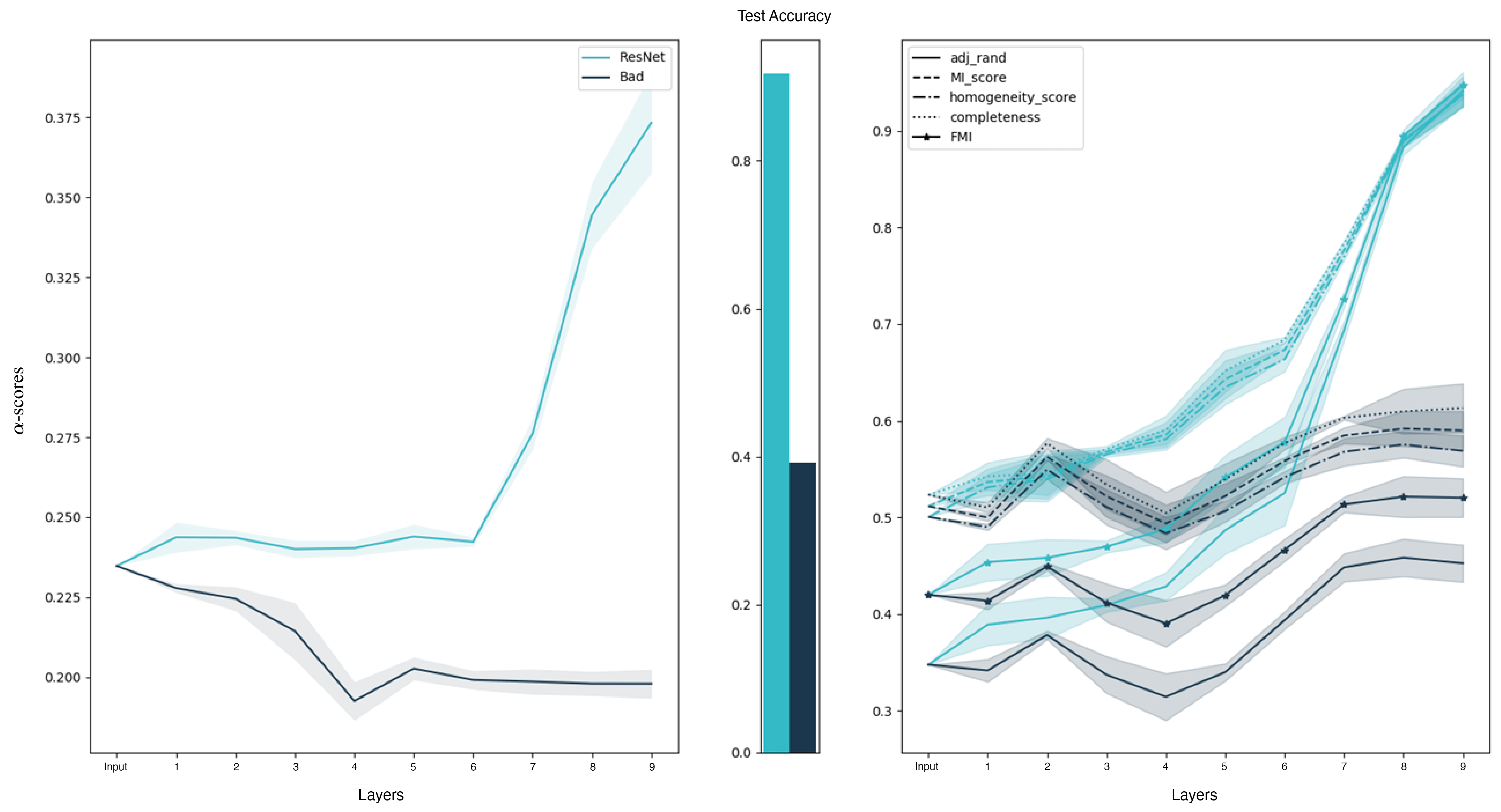}
\end{center}
   \caption{Comparing good and bad architectures $\alpha$-scores on the Fashion-MNIST dataset. The Sparsity-Probe is able to differentiate the quality of the models, and see the inter layer improvements.}
\label{good_bad_fig}
\label{fig:long}
\label{fig:onecol}
\end{figure}

\subsubsection{Fashion-MNIST - Clustering Correlation}
It is natural to ask how $\alpha^*$ behaves compared to clustering statistics. We would expect high correlation between the metrics when the model latent features are well clustered, and low correlation when the features are not well clustered, or clustered into many different clusters per-label.

To test this, we run the KMeans clustering algorithm with $k=10$ on each of the layer features, using the models from the previous section.
Figure \ref{good_bad_fig}(Right) shows the clustering statistics compared in each intermediate layer. In the good models, the correlation between the sparsity and the clustering statistics is positive, demonstrating the data gathers into better label-groups as the layers progress. However, in under performing models, the clustering indices do not manage to asses how bad is the geometry of the represnetations in the intermediate layers. Moreover, in this example, the clustering indices fail to capture the fact that the representations get `worse' throughout the layers. 

We can demonstrate this by using the Pearson Correlation coefficient between each of the clustering statistics and $\alpha^*$ on all model layers. These results are reported in table \ref{tab:dataset_clustering}.

\begin{table}\centering
\caption{Pearson Correlation: $\alpha$-scores vs. Clustering on the Fashion-MNIST dataset}
\bigskip
\label{tab:dataset_clustering}
\begin{tabular}{@{}ccccc@{}}
\toprule
    Metric&Bad&Good \\
\midrule
    Rand Index - adjusted for chance&-0.41&0.96 \\
    Adjusted Mutual Information&-0.41&0.93\\
    Homogeneity&-0.39&0.94\\
    Completeness&-0.43&0.93\\
    Fowlkes-Mallows Index&-0.43&0.96\\
\bottomrule
\end{tabular}
\end{table}

\subsubsection{Case Study - MNIST-1D}\label{mnist_1D}
Recent work \cite{MNIST_1d} propose a 1D parallel to the well-known MNIST. The intention of this dataset is to scale down the dimension of the MNIST, and essentially turn the classes into different signals. The authors also show how, as opposed to MNIST, the signals are intertwined in the input space, and are then much less separable. One could ask - does the model improve throughout the epochs? We use this dataset, with a simple 7-layer Convolutional Neural Network, and monitor the $\alpha^*$ at every 10 epochs. The results are shown in figure \ref{epochs_fig}. It is clear that the model not only improves in the final layer, but is able to create a smoother increase in the intermediate layers, as we progress through the epochs. 

\begin{figure}[t]
\begin{center}
   \includegraphics[width=1.\linewidth]{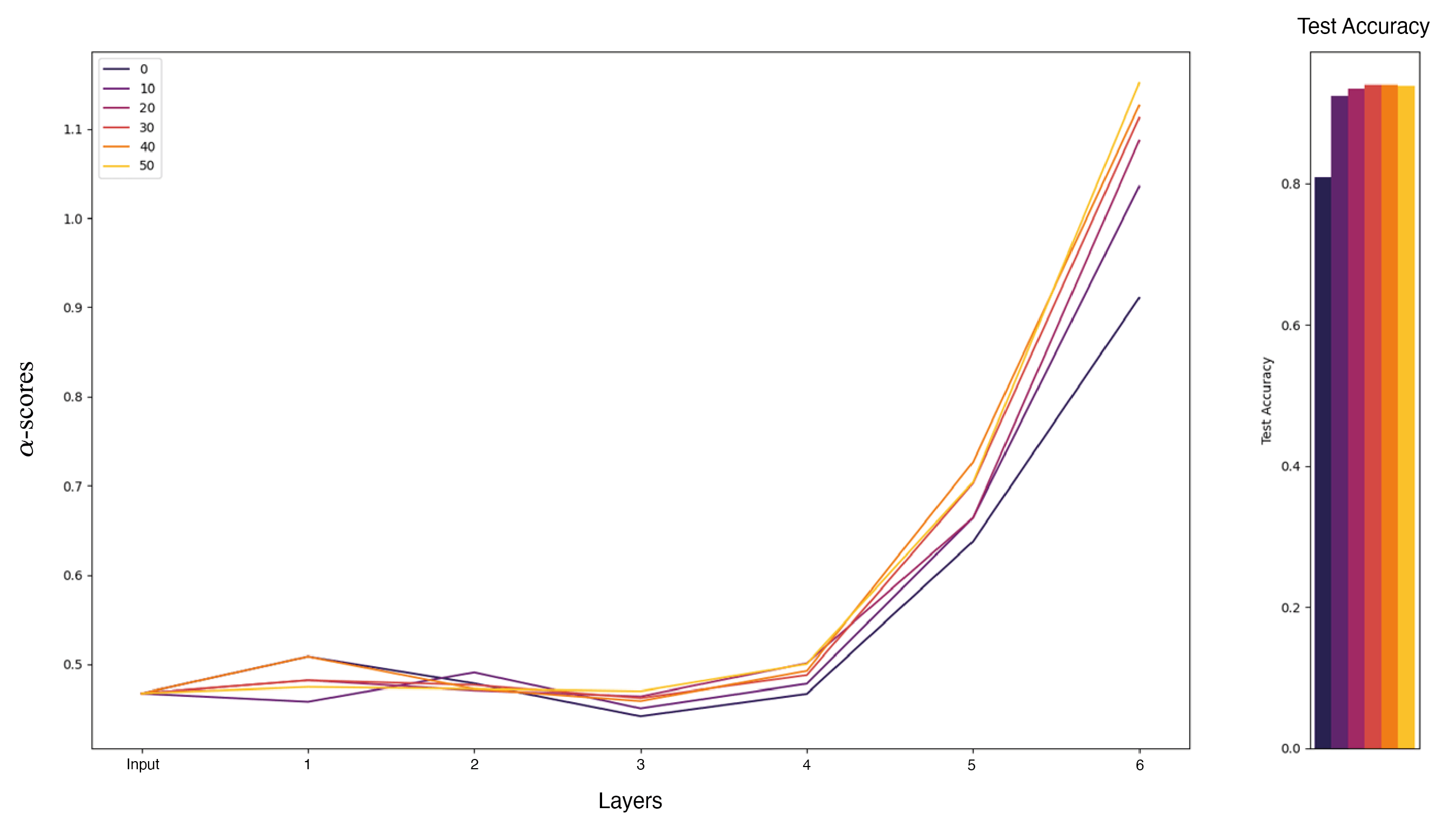}
\end{center}
   \caption{Comparing different epochs during train of a 7-Layer convolutional network on the MNIST-1D dataset. It is clear that the $\alpha$-scores improve through the epochs, and show a smoother rise, with more contribution at every layer. 
    }
\label{epochs_fig}
\label{fig:long}
\label{fig:onecol}
\end{figure}

\subsubsection{Case study- Image classification from the magnitude of Fourier coefficients}\label{phase_mnist}
The problem of Phase Retrieval(PR) is defined as recovering the an image solely using the magnitude of its Fourier transform coefficients. This is a problem that arises in many applications and is obviously an ill-posed inverse problem. Here, we experiment with DL architectures that aim to classify images from the MNIST dataset, again, using only the magnitude of the Fourier coefficients as input. At first, it seems natural to use a standard convolutional network for this classification task. However, as is clear from  Figure \ref{phase_fig}, this approach completely fails. As the plot of the $\alpha^*$ score for this model shows, the network fails to 'unfold' the data and the test accuracy score is very low. The explanation for this phenomena is that architectures based on convolutions assume there are spatial correlations between neighbouring pixels in the input or features in the feature maps. However, this is not true in the Fourier domain. As we see in Figure \ref{phase_fig}, when one applies a fully-connected network architecture, it is able to learn features which are not of spatial nature. We see how the Sparsity Probe is able to capture the fit of the network to the problem. 

\begin{figure}[t]
\begin{center}
   \includegraphics[width=1.\linewidth]{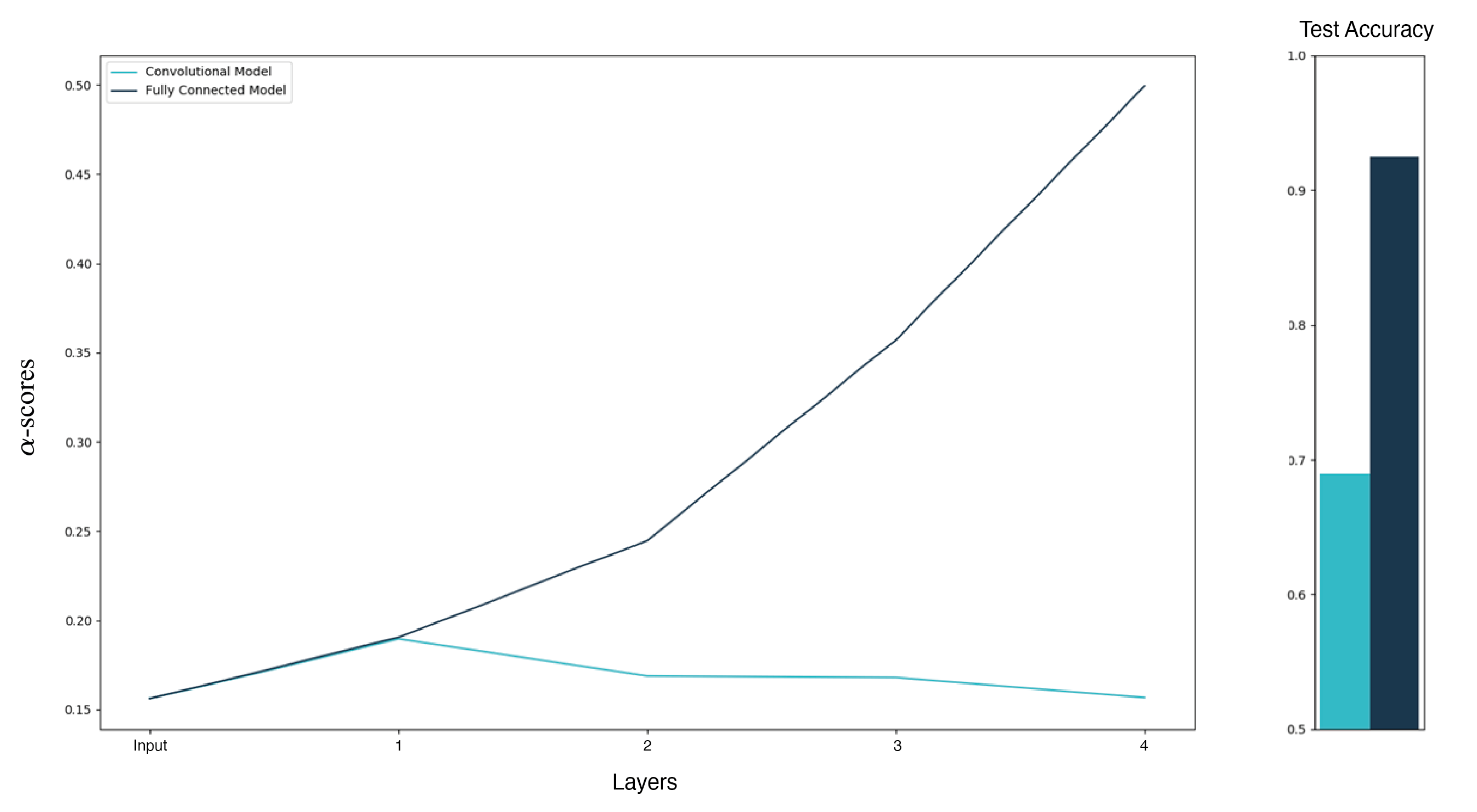}
\end{center}
   \caption{Comparing Fully-Connected and Convolutional architectures on MNIST classification using only Fourier-intensity of the images. It is clear that the $\alpha$-scores of the architectures match the intuition in this task, as the Fourier Domain does not consist of spatial features, and will therefore fail when using convolutional layers.
    }
\label{phase_fig}
\label{fig:long}
\label{fig:onecol}
\end{figure}

\section{Analyzing and debugging architectures}
In this section we leverage the sparsity-probe to analyze model architectures and detect problematic layers. We use a 7-layer Convolutional Network with Batch Norm \cite{2015arXiv150203167I}, RELU activation function, and two linear classification layers. Each model is trained with 3 different initialization seeds, for 100 Epochs on the MNIST-1D dataset. We show direct correlations between the $\alpha^*$ and the test accuracy, even though the $\alpha^*$ are computed solely from the train dataset.

\subsubsection{Picking the Batch Size}
Given a network architecture, meta-parameters still need to be fine-tuned for network training. The batch size is of critical importance as it is constrained by the compute, but also by the affects in the optimization algorithm. If the batch-size is too large, many gradients of different directions can lead to slower convergence. However, if the batch size is too small, the batch can be non-representative of the dataset and lead to a wrong gradient step. In Figure \ref{batch_size_fig} we compare different Batch Sizes. From the analysis, it is apparent that a batch-size of 512 is too large, and results in lower $\alpha^*$, and accordingly test scores. Batch sizes of 64 and 128 behave similarly in terms of $\alpha^*$, with a slight advantage to 64, matching the test-scores. Using the sparsity-probe to investigate, we can understand the trade-off between the computational cost and the added gain.

\begin{figure}[t]
\begin{center}
   \includegraphics[width=1.0\linewidth]{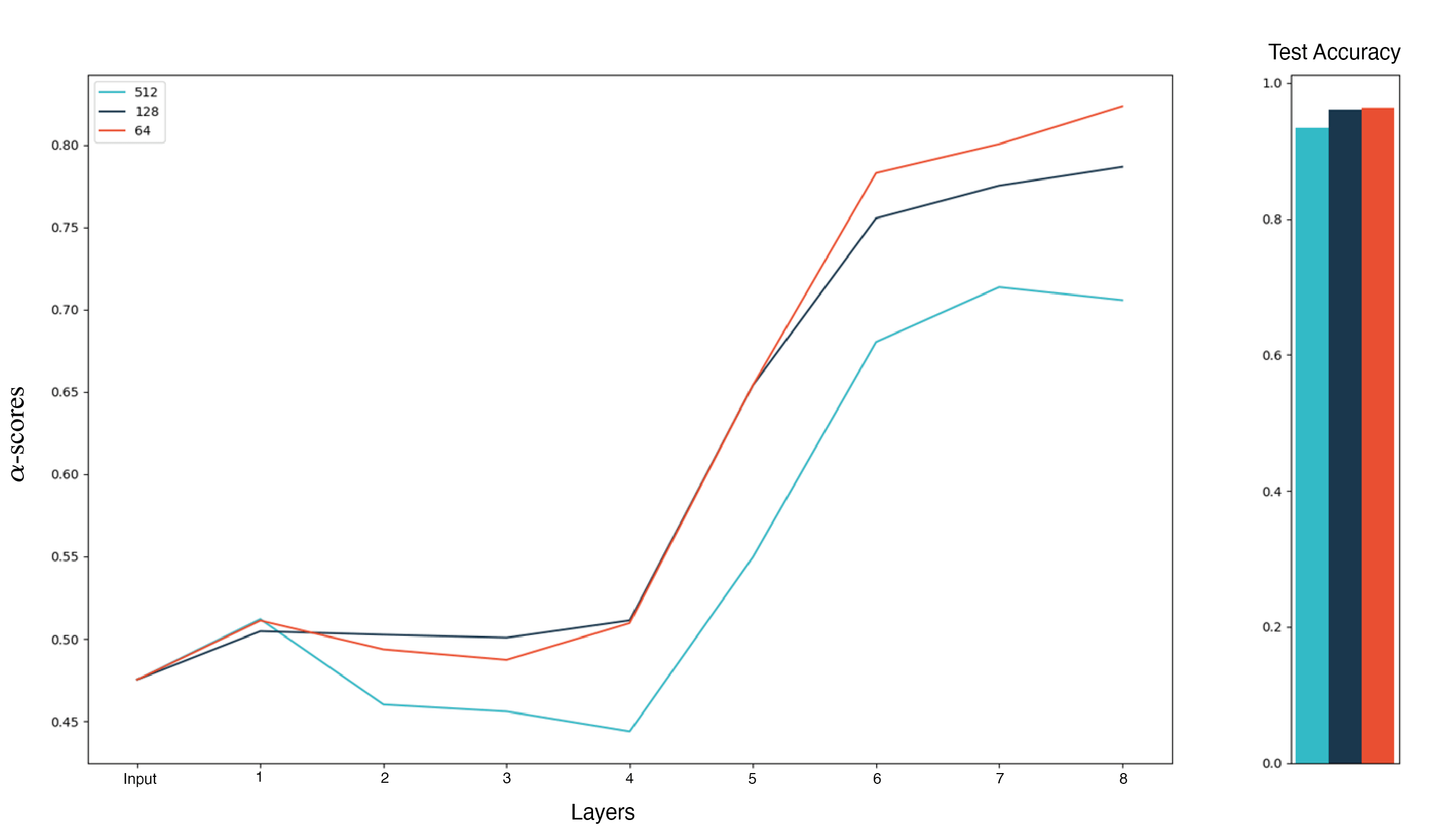}
\end{center}
   \caption{Comparing different Batch-Sizes on the MNIST-1D dataset using the Sparsity-Probe.}

\label{batch_size_fig}
\label{fig:long}
\label{fig:onecol}
\end{figure}

\subsubsection{Picking the Activation Function}
The activation function is one of the most dismissed architecture choices, yet known to be vital. In modern architectures, mostly ReLU activations are used. Suppose we are trying to create a new activation function, by using a simple step function: 
\[
f(x)=\begin{cases}
    1, & \text{if $x>0$}.\\
    0, & \text{otherwise}.
  \end{cases}
\]
This is obviously a problematic activation value, as value magnitudes are not considered. We wish to compare it to other nonlinearities. During first works in Neural Networks, the Sigmoid\cite{10.1007/3-540-59497-3_175} was proposed as a nonlinearity. It was later shown to promote several issues, such as vanishing gradients. Different alternatives to ReLU have been proposed such as Leaky-ReLU, and GELU \cite{2016arXiv160608415H}. We compare these activations on the specific task in figure \ref{activations_fig}. For this particular dataset, we see a dominance of the ReLU to other activations. The ReLU variants - Leaky ReLU and GELU are relatively close in performance. The Sigmoid is indeed far lower in terms of $\alpha^*$, and we see a dip during train. Lastly, the proposed nonlinearity fails magnificently, and the $\alpha^*$ get worse throughout the layers. Remarkably, the Sparsity-Probe perfectly matches the ordering of the test scores!

\begin{figure}[t]
\begin{center}
   \includegraphics[width=1.0\linewidth]{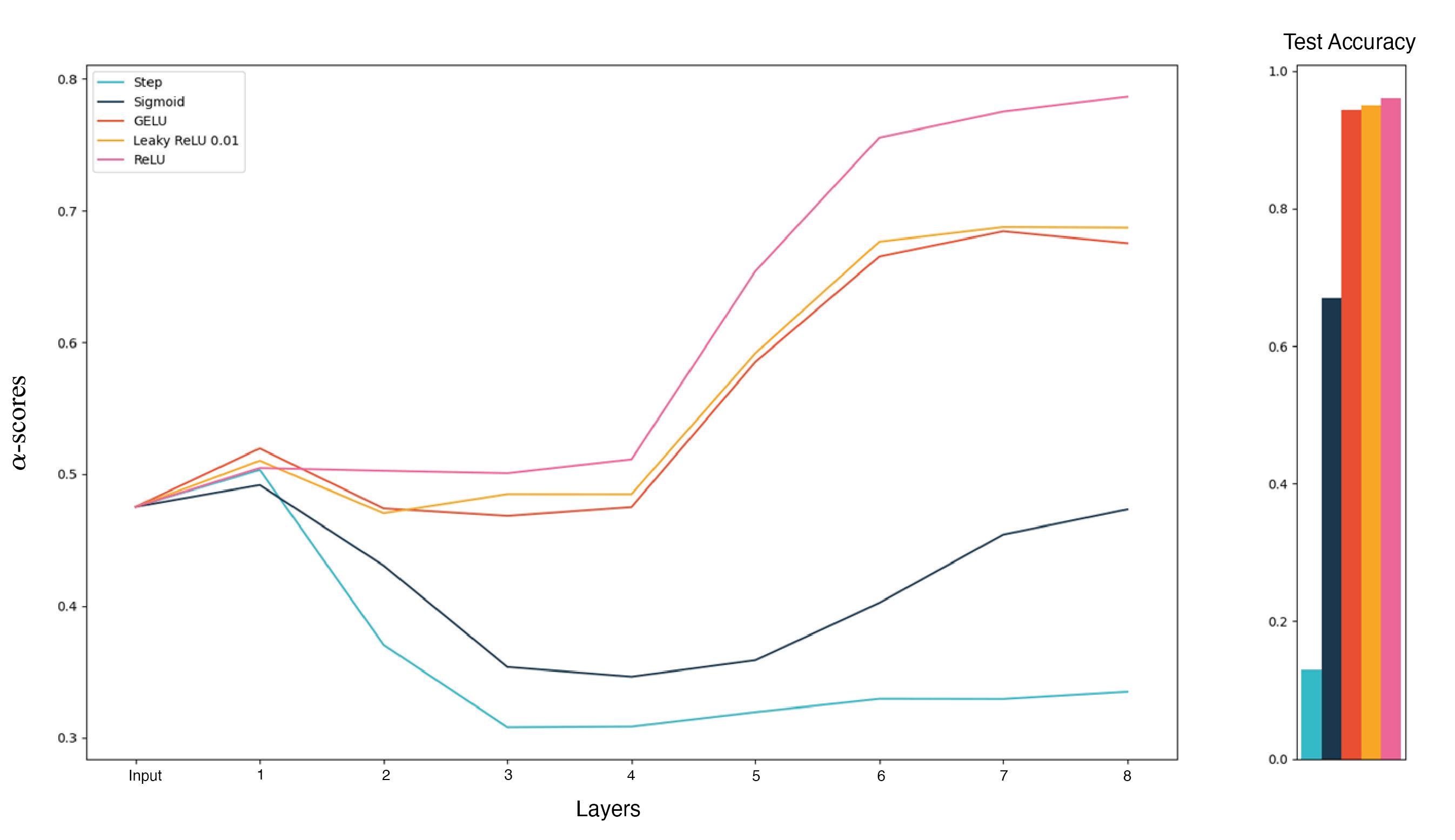}
\end{center}
   \caption{Comparing different Activation Functions on the MNIST-1D dataset using the Sparsity-Probe. We see correlation between the Test Scores and the $\alpha*$ scores in the last level. The proposed 'Step' activations $\alpha^*$ scores deteriorate throughout the layers.}
\label{activations_fig}
\label{fig:long}
\label{fig:onecol}
\end{figure}

\subsubsection{Increasing the Stride}
Suppose we are trying to improve our architecture, by increasing the stride at the $4^\textrm{th}$ layer, from $2$ to $5$, while the kernel size remains the same(3). This would of course result in a loss of information, as the receptive field does not cover the entire input. A comparison of the $\alpha^*$ is shown in Figure \ref{big_stride_fig}. It is clear that during the $4^\textrm{th}$ layer there is a big dip in the $\alpha^*$. This affects the performance of earlier layers, yet we see an increase after the problematic layer. Using our tool, without looking at the test scores, we can pinpoint the exact location where the network fails.

\begin{figure}[t]
\begin{center}
   \includegraphics[width=1.0\linewidth]{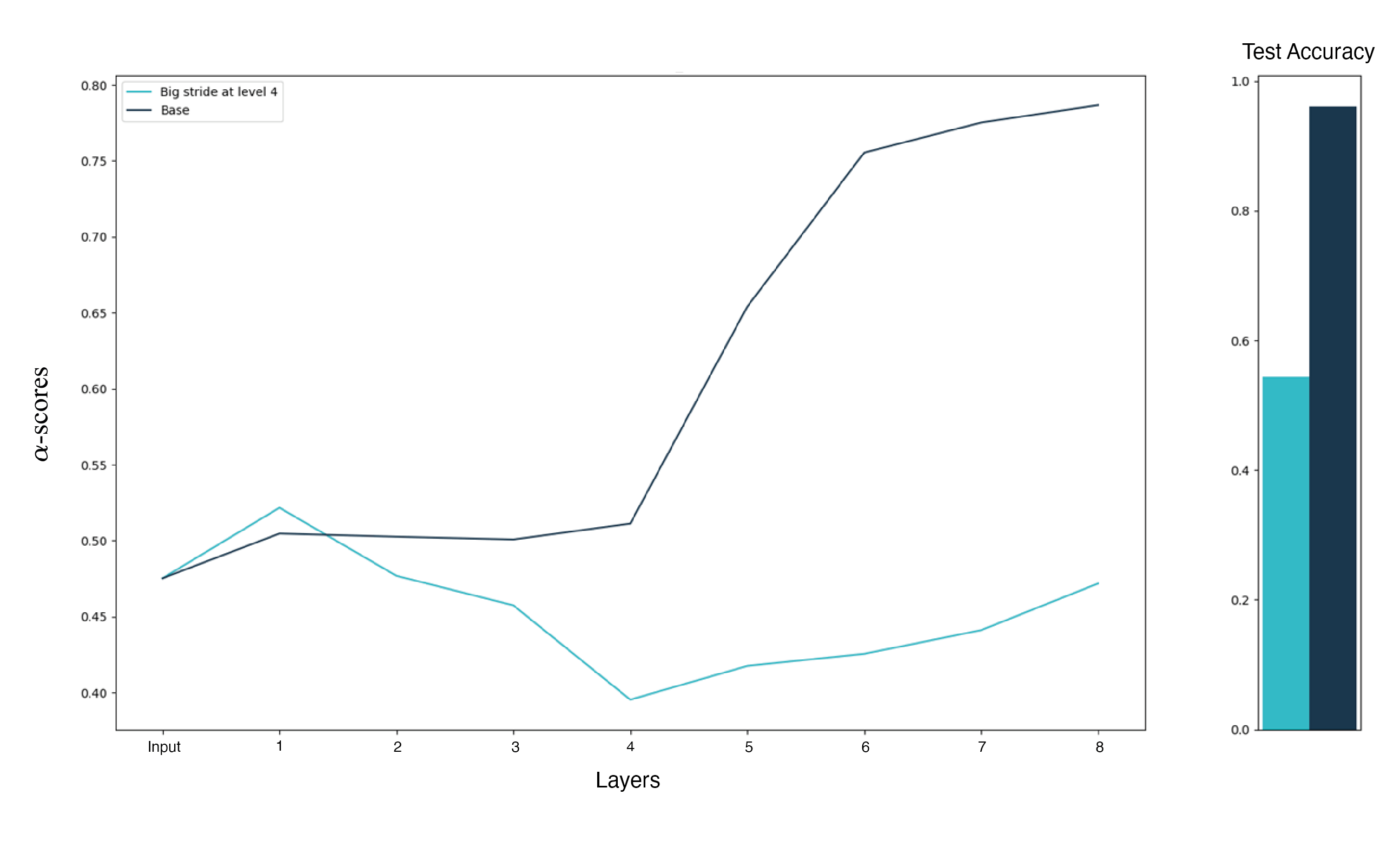}
\end{center}
   \caption{Adding a disproportionately big stride in the 4th layer. We see a sharp decrease in the 4th layer $\alpha^*$-scores, and a slow increase after that. It is interesting to notice that there is a decrease in $\alpha^*$-scores in the first three layers. This is because the gradients are blocked by the 4th layer bottleneck.}
\label{big_stride_fig}
\label{fig:long}
\label{fig:onecol}
\end{figure}

\subsubsection{Batch Normalization}
Batch Normalization has proved extremely helpful for the stability of training, especially in very deep network architectures. Suppose we are trying to test the affects of the Batch Norm(BN), by only applying it at the first $k$ layers. In figure \ref{batch_norm_fig} we report such comparison, with respect to the base architecture, that includes Batch Normalization at all ConvLayers. It is significant to see that the addition of each BN layer increases the $\alpha^*$, and accordingly, the test scores. 

\begin{figure}[t]
\begin{center}
   \includegraphics[width=1.0\linewidth]{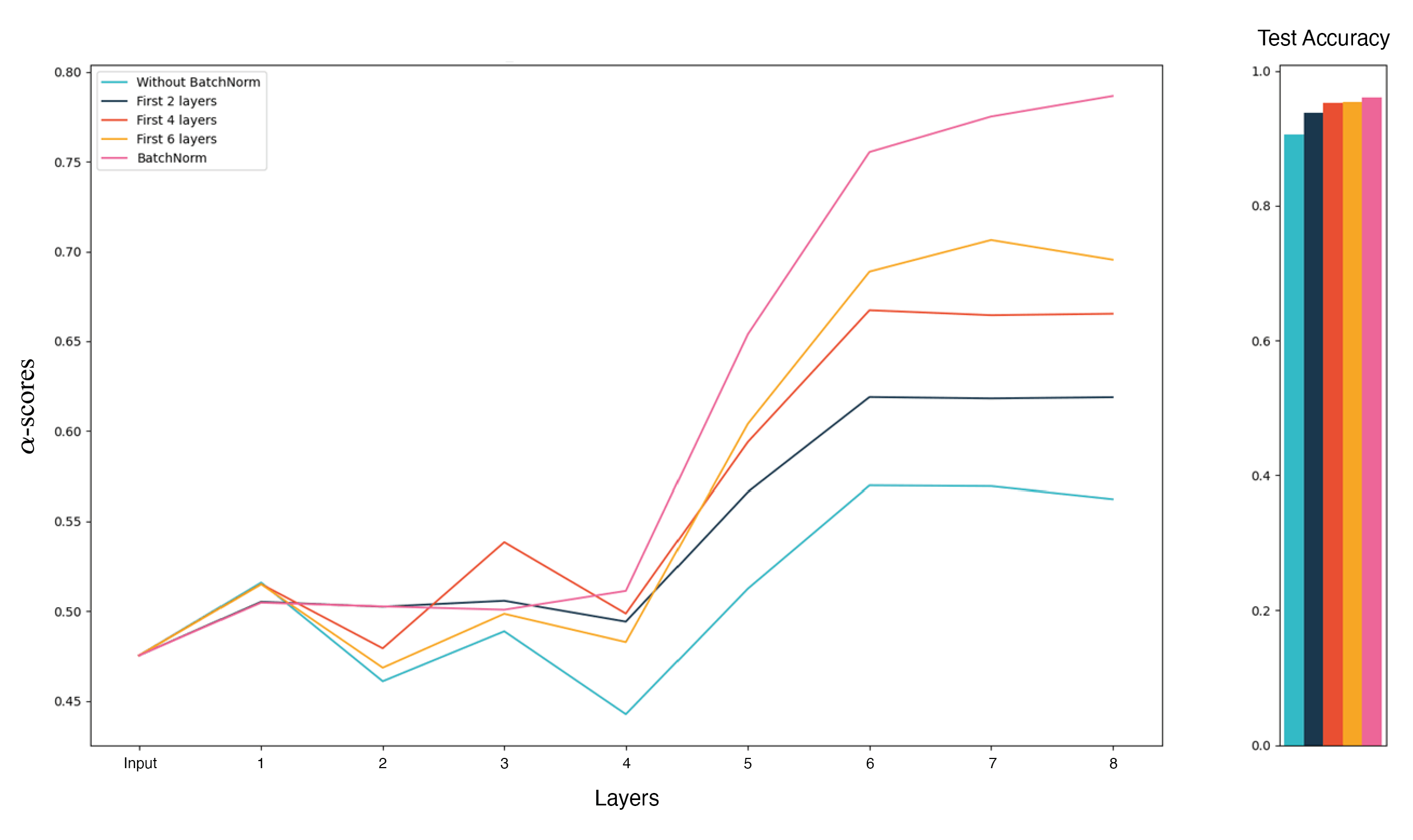}
\end{center}
   \caption{Comparing BatchNorm on a different array of layers using the Sparsity-Probe. We see the added benefit of adding every BN layer.}
\label{batch_norm_fig}
\label{fig:long}
\label{fig:onecol}
\end{figure}

\comment{
\begin{figure}[ht]
\begin{subfigure}[b]{0.25\textwidth}
  \centering
  \includegraphics[width=.8\linewidth]{Figures/datasets/comparing_batch_sizes.png}  
  \caption{Put your sub-caption here}
  \label{fig:sub-first}
\end{subfigure}
\begin{subfigure}[b]{0.25\textwidth}
  \centering
  \includegraphics[width=.8\linewidth]{Figures/datasets/comparing_batch_sizes.png}  
  \caption{Put your sub-caption here}
  \label{fig:sub-second}
\end{subfigure}
\caption{Put your caption here}
\label{fig:fig}
\end{figure}
}

\comment{\begin{figure}[t]
\begin{center}
\end{center}
   \caption{$\alpha^*$ scores of a network with \nth{3} activation missing. We see a the $\alpha^*$ remain constant where the activation is missing.
    }
\label{missing_activation}
\label{fig:long}
\label{fig:onecol}
\end{figure}}

\section{Conclusion}
In this paper, we present the \textbf{Sparsity-Probe}, a new method for measuring supervised model quality using sparsity considerations. We give an in-depth explanation of the numerical algorithm and its theoretic background. We give motivation to why the mathematical complexity in this method is necessary for promising results. We show how this method relates to clustering metrics, and show that our method approximates the theoretical bound on a simple 2D synthetic dataset. Our experiments are conducted onto different datasets and have various end goals. We show how the Sparsity-Probe can be used to assess a model quality without an auxiliary test set. This leads to many downstream capabilities such as finding flaws in the architecture and selecting a robust model. 
\comment{
\subsection{Discussion}
Basic results in function theory such as \cite{Tractability} relate the number of samples needed to properly approximate a function in dimension $n_k$. Since the functions we are approximating are possibly very non-smooth and perhaps not even differentiable, the amount of samples needed is exponential. In the Deep Learning domain, $n_k$ is in fact very large, which makes approximating these functions difficult. Given these constraints, further work is needed to improve the Sparsity-Probe to work early layers of large models. Another difficulty is computing the Random Forest of many high-dimensional feature vectors. Future work includes enhancing the Sparsity-Probe to very large models. The Sparsity-Probe gives unique potential to giving mathematical cache to certain network attributes, such as activation function or normalizing units. It can also find flaws in architectures on certain datasets, and give headway on ways to fix them. }

\appendix
\section{Proofs}
\begin{proof} [Proof of Lemma \ref{tau-example-2d}]
Let $\mathcal{T}_I$ be the non-adaptive tree with partitions at dyadic values along the main axes. $\mathcal{T}_I$ partitions ${[0,1]}^2$ into $2^{k}$ rectangles of area $2^{-k}$ on level $k$. Since we are in the binary setting, the output range is the interval $[0,1]$ and for $\Omega\in\mathcal{T}_{I}$, $E_{\Omega}$ is a scalar.

Let $l(\Omega)$ be the level in which domain $\Omega$ was created in $\mathcal{T}_I$. The $\tau$-Sparsity of $\mathcal{T}_{I}$ is given by:
\begin{align*}
N_{\tau} \left( {f,{\mathcal T_{I}}} \right) & := \left( {\sum\limits_{\Omega' \in {\mathcal{T}_{I}}, {\Omega' \ne {[0,1]}^2}} {\left\| {\psi_{\Omega' } } \right\|_{2}^{\tau } } } \right)^{1/\tau } \\ & = \left( {\sum\limits_{\Omega' \in {\mathcal T_{I}}, {\Omega' \ne {[0,1]}^2} } {\left| {{{E}}_{{\Omega}'} -{{E}}_{\Omega 
}} \right|^{\tau } } |\Omega'|^{\frac{\tau}{2}}} \right)^{1/\tau }
\\ & = \left( {\sum\limits_{\Omega' \in {\mathcal T_{I}}, l(\Omega')=k, k>0 } {\left| {{{E}}_{{\Omega}'} -{{E}}_{\Omega 
}} \right|^{\tau } } |\Omega'|^{\frac{\tau}{2}}} \right)^{1/\tau } 
\end{align*}

Let $\Omega'\in{\mathcal{T}_{I}}$, and $\Omega\in{\mathcal{T}_{I}}$ be its parent in $\mathcal{T}_{I}$. If $\Omega'\cap\partial\tilde{\Omega} = \emptyset$, and $\Omega\cap\partial\tilde{\Omega} = \emptyset$ then ${{{E}}_{{\Omega}'} -{{E}}_{\Omega }} = 0$. Otherwise, if $l(\Omega') = k$, 
\begin{align*}
{| {{{E}}_{{\Omega}'} -{{E}}_{\Omega }} |}^{\tau} |\Omega'|^{\frac{\tau}{2}} \le 2^{\frac{-\tau{k}}{2}}
\end{align*}

Therefore, 

\begin{align*}
{N_{\tau} \left( {f,{\mathcal T_{I}}} \right)} ^ {\tau} \le {\sum\limits_{\Omega' \in {\mathcal T_{I}}, k>0 } {2^\frac{-\tau{k}}{2} \# A_{k} }}  
\end{align*}

Where Pa$(\Omega')$ is the parent of $\Omega'$ in ${\mathcal{T}_{I}}$, and
\[
A_{k} := \{\Omega': \Omega'\in{\mathcal{T}_{I}}, l(\Omega')=k, \textrm{Pa}(\Omega')=\Omega, (\Omega'\cap\partial\tilde{\Omega} \ne \emptyset \vee \Omega\cap\partial\tilde{\Omega} \ne \emptyset) \}.
\]

We claim that
\begin{align} \label{amount_of_domains_estimate}
\# A_{k} & = \le C(\tilde{\Omega})2^{\floor*{\frac{k+1}{2}}}.
\end{align}

Where $C(\tilde{\Omega})$ is a constant dependant of the domain $\tilde{\Omega}$. This implies that
\begin{align*}
{N_{\tau} \left( {f,{\mathcal T_{I} }} \right)} ^ {\tau} & \le C(\tilde{\Omega})\left( {\sum\limits_{k=1}^{\infty} {2^{\frac{-\tau k}{2}  +\floor*{\frac{k+1}{2}}}}} \right) \\ & = C(\tilde{\Omega})\left( {\sum\limits_{j=1}^{\infty} {2^{\frac{-\tau (2j)}{2}  +\floor*{\frac{2j+1}{2}}}}} + {\sum\limits_{j=1}^{\infty} {2^{\frac{-\tau (2j+1)}{2}  +\floor*{\frac{2j+2}{2}}}}} \right) \\ & = C(\tilde{\Omega})\left( (1 +2^{1-\frac{\tau}{2}}){\sum\limits_{j=1}^{\infty} {2^{-j(\tau-1) }}} \right)
\end{align*}

and so,
\[
N_{\tau} \left( {f,{\mathcal T_{I}}} \right)<\infty \Leftrightarrow \tau > 1
\]

Let us return to the estimate \ref{amount_of_domains_estimate}. We define: 
\[
B_{k} := \{\Omega': \Omega'\in{\mathcal{T}_{I}}, l(\Omega')=k, \Omega'\cap\partial\tilde{\Omega} \ne \emptyset \}.
\]

We notice that if: 
\begin{align}
\label{B_k_estimate}
\#B_k \le  C(\tilde{\Omega}) 2^{\floor*{\frac{k+1}{2}}},    
\end{align}

then the following relation holds: 
\begin{align*}
\#A_k & \le \# B_{k} + \# B_{k+1} \\ & \le C_{1}(\tilde{\Omega}) 2^{\floor*{\frac{k+1}{2}}} + C_{2}(\tilde{\Omega}) 2^{\floor*{\frac{k+2}{2}}} \\ &  \le C(\tilde{\Omega}) 2^{\floor*{\frac{k+1}{2}}}.
\end{align*}

Let us now show \ref{B_k_estimate}.  First, we notice that it is enough to show that for every even layer $2k$, $\# B_{2k} < C(\tilde{\Omega}) 2^{k} = C(\tilde{\Omega}) 2^{\floor*{\frac{2k+1}{2}}}$. Once this is shown, for every odd layer, $2k+1$, the amount of domain intersections with the $\partial\tilde{\Omega}$ is at most the number of intersections in the next layer: 
\[
\# B_{2k+1} \le \#B_{2k+2} \le C(\tilde{\Omega}) 2^{k+1} = C(\tilde{\Omega}) 2^{\floor*{{\frac{2k+1}{2}}}}
\]
Let us look at the boundary $\tilde{\Omega}$ at an even layer $2k$. There are a finite number of points where the gradient of the boundary is aligned with one of the main axes. Between these points the boundary segments are monotone in $x_1$ and $x_2$ and therefore the amount of cubes it intersects is at most $2\times2^k$. This is because on axis $x_1$, the boundary is monotone, and so it can intersect at most $2^k$ dyadic cubes. The same goes for axis $x_2$. We can then bound the total number of intersections at level $2k$ by $C(\tilde{\Omega})2^k$, where $C(\tilde{\Omega})$ is determined by the number of points where the boundary gradient is aligned with one of the main axes. This bound is visualized in figure \ref{first_proof_fig}. 
\end{proof}

\begin{figure}[t]
\begin{center}
   \includegraphics[width=0.8\linewidth]{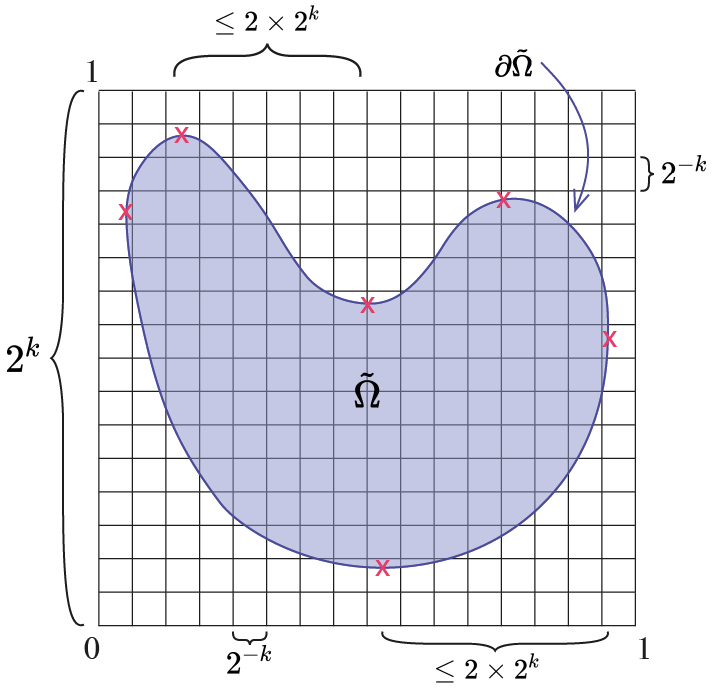}
\end{center}
   \caption{Visualization of bound on the number of cubes that intersect with the boundary $\partial\tilde{\Omega}$, ($\# B_{2k}$) at level $2k$.
    }
\label{first_proof_fig}
\label{fig:long}
\label{fig:onecol}
\end{figure}

\begin{proof} [Proof of Lemma \ref{existence-tau-index}]
Let $f(x)=\sum\limits_{k=1}^K {{c}_{k}}{\bf 1}_{{B}_{k}}(x)$, where ${B}_{k}\subset {\Omega_0}$ are disjoint cubes with sides parallel to main axes, ${c}_{k}\in\mathbb{R}$. Since the cubes $\{{B}_{k}\}$ are disjoint, it is possible to find a tree ${\mathcal T}_{A}$ such that at level $N_0$, each cube ${B}_{k}$ is in a separate domain ${\Omega}_{{B}_k}$. The sides of ${B}_{k}$ are parallel to the main axes, and so it is possible to partition ${\Omega}_{{B}_k}$ 4 times, such that the resulting partition ${\Omega'}_{{B}_k}$ is exactly ${{B}_k}$ with value $C_{k}$. We notice, that any partition after this level, $l=N_1$, the value of $f$ does not change, and therefore the for wavelets $\psi_{\Omega'}$
\[
{\vec{{E}}_{{\Omega }'} -\vec{{E}}_{\Omega }} = \vec{0}
\]
and, 
\[
    \left\| {\psi_{{\Omega }'} } \right\|_{L_2} =\left\| {\vec{{E}}_{{\Omega }'} -\vec{{E}}_{\Omega }} \right\|_{l_{2}({\mathbb{R}}^{L})} \left| \Omega' \right|^{1/2} = 0.
\]

Therefore, 

\begin{align*}
{N_{\tau} \left( {f,{{\mathcal T}_{A}}} \right)} ^ \tau& :=  {\sum\limits_{\Omega' \in {\mathcal{T}_{A}}, {\Omega' \ne {[0,1]}^n}} {\left\| {\psi_{\Omega' } } \right\|_{2}^{\tau } } } \\ & =  {\sum\limits_{\Omega' \in {\mathcal T_{A}}, {\Omega' \ne {[0,1]}^n}, l(\Omega')\le N_{1} } {\left\| {{{E}}_{{\Omega}'} -{{E}}_{\Omega 
}} \right\|_{l_2}^{\tau } } |\Omega'|^{\frac{\tau}{2}}} \\ & < \infty
\end{align*}

where the last transition is true since we are adding a finite amount of finite values. 
A visualization of a possible ${\mathcal T}_{A}$ is shown in figure \ref{second_proof_fig}.
\end{proof}

\begin{figure}[t]
\begin{center}
   \includegraphics[width=0.8\linewidth]{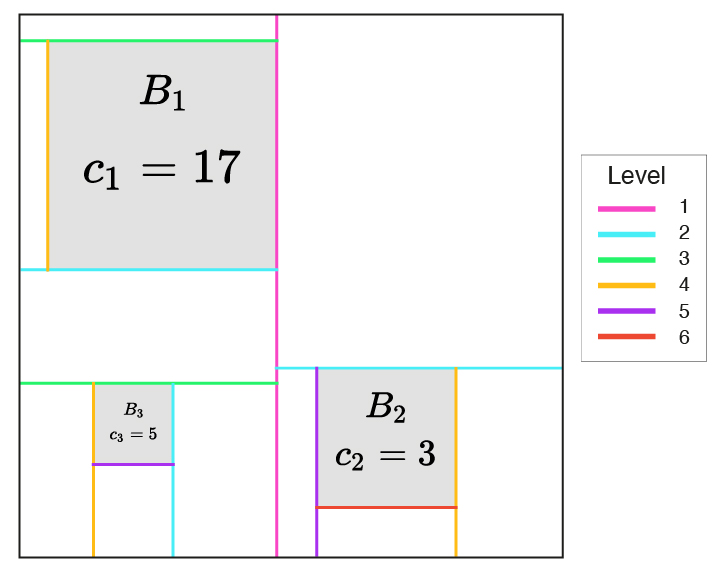}
\end{center}
   \caption{Example adaptive tree ${\mathcal{T}}_{A}$ on  $f(x)=\sum\limits_{k=1}^K {{c}_{k}}{\bf 1}_{{B}_{k}}(x)$. 
    }
\label{second_proof_fig}
\label{fig:long}
\label{fig:onecol}
\end{figure}


\comment{
Let $\Omega'\in\mathcal{T}$ be in level $2j$, and $\Omega$ the parent. Since $E_{\Omega}, E_{\Omega'}\in[0,1]$, $\|{{E}}_{\Omega'} -{E}_{\Omega}\|_{l_2} \le 1$. Further, $|\Omega'|^{\frac{\tau}{p}}=(2^{-2j})^{\frac{\tau}{p}}=2^{\frac{-2j\tau}{p}}.$ Which yields: 
\[
     \|{{E}}_{\Omega'} -{E}_{\Omega}\|_{l_2}^{\tau } |\Omega'|^{\frac{\tau}{p}} \le 2^{\frac{-2j\tau}{p}}.
\]

Let $\Omega'\in\mathcal{T}$ be in level $2j+1$, and $\Omega$ the parent. For each such $\Omega'$, $|\Omega'|=\frac{|\Omega|}{2}$. We notice
\[|\{\Omega':l(\Omega')=2j+1, \Omega'\cap\tilde{\Omega} \ne \emptyset\}| \le 2|\{\Omega':l(\Omega')=2j, \Omega'\cap\tilde{\Omega} \ne \emptyset\}|.
\]
Therefore:
\begin{align*}
{\sum\limits_{l(\Omega')=2j+1, {\partial{\tilde{\Omega}}}\cap{\Omega'} \ne {\emptyset}} {\left\| {{{E}}_{{\Omega}'} -{{E}}_{\Omega }} \right\|_{l_2}^{\tau } } |\Omega'|^{\frac{\tau}{p}}}&\le{\sum\limits_{l(\Omega')=2j+1} {1^\tau} |\Omega'|^{\frac{\tau}{p}}}|\{\Omega':l(\Omega')=2j+1, \Omega'\cap\tilde{\Omega} \ne \emptyset\}| \\
\\
&\le{\sum\limits_{l(\Omega')=2j} {1^\tau} |\Omega'|^{\frac{\tau}{p}}}|\{\Omega':l(\Omega')=2j, \Omega'\cap\tilde{\Omega} \ne \emptyset\}|
\end{align*}
We can then look at only the even layers. We also notice that for $\Omega'\in\mathcal{T}$ in level $2j$, $|\{\Omega':l(\Omega')=2j, \Omega'\cap\tilde{\Omega} \ne \emptyset\}|\le2^j$.

\begin{align*}
\left( {\sum\limits_{l(\Omega')=2j, j \ge 0} {\left\| {{{E}}_{{\Omega}'} -{{E}}_{\Omega 
}} \right\|_{l_2}^{\tau } } |\Omega'|^{\frac{\tau}{p}}} \right)^{1/\tau } &\le C\left( {\sum\limits_{j \ge 0} {1^\tau} |\Omega'|^{\frac{\tau}{p}}}|\{\Omega':l(\Omega')=2j, \Omega'\cap\tilde{\Omega} \ne \emptyset\}| \right)^{1/\tau } \\
&\le C({\sum\limits_{j \ge 0} {2^{\frac{-2j\tau}{p}+j}}})^{1/\tau }.
\end{align*}
We conclude that
\[
N_{\tau}<\infty \Leftrightarrow \tau > 1.
\]
}

\comment{
\section{Experiment Settings}
\subsection{CIFAR10 Experiments(\ref{cifar_vgg})}:
We use the normal VGG architectures. $\alpha^*$-scores are measured from the \nth{6} level, where a level is either a ReLU activation, or a MaxPooling layer.

\subsection{Fashion-MNIST Experiments(\ref{fashion_mnist})}:
The network architectures are given in table \ref{tab:table_good_bad_arch}.
\begin{table}\centering
\caption{Good vs. Bad architectures used in Fashion-MNIST experiments}
\bigskip
\label{tab:table_good_bad_arch}
\begin{tabular}{@{}ccccc@{}}
\toprule
    Network&  Activation& Architecture \\
\midrule
   Good&ReLU&   \text{Conv3-32}\rightarrow{\text{MaxPool-2}}\rightarrow 4\times{\text{Conv3-32}} \rightarrow{3\times{\text{FC}}}& \\
   Bad&Tanh& 2\times{\text{Conv7-1}}\rightarrow{\text{AvgPool-2}}\rightarrow{2\times\text{FC-16}}\rightarrow{\text{Conv4-10}}\rightarrow{2\times\text{FC-10}}&\\
\bottomrule
\end{tabular}
\end{table}

\subsection{MNIST-1D Experiments(\ref{mnist_1D})}:
Our base architecture consists of $5$ Convolutional layers, with $120$ channels. The first layer has a kernel size $5$ while the others have kernel size $3$. A stride of $2$ is used in every layer. The classifier is a single linear layer.

\subsection{PHASE-MNIST Experiments(\ref{phase_mnist})}
The network architectures are given in table \ref{tab:phase_mnist_archs}.
\begin{table}\centering
\caption{Good vs. Bad architectures used in Phase-MNIST Case Study }
\bigskip
\label{tab:phase_mnist_archs}
\begin{tabular}{@{}ccccc@{}}
\toprule
    Network&  Architecture \\
\midrule
   Good-Phase& \text{FC:}784\rightarrow1024\rightarrow1568\rightarrow1568\rightarrow200\rightarrow10& \\
   Bad& 2\times{\text{Conv3-8}}\rightarrow{\text{MaxPool}}\rightarrow{\text{Conv3-2}}\rightarrow{\text{FC-10}}&\\
\bottomrule
\end{tabular}
\end{table}

}

\bibliographystyle{siamplain}
\bibliography{SIMODS_main}
\end{document}


\twocolumn[
\icmltitle{Sparsity-Probe - Supplementary Material}

\icmlsetsymbol{equal}{*}

\begin{icmlauthorlist}
\icmlauthor{Aeiau Zzzz}{equal,to}
\icmlauthor{Bauiu C.~Yyyy}{equal,to,goo}
\icmlauthor{Cieua Vvvvv}{goo}
\icmlauthor{Iaesut Saoeu}{ed}
\icmlauthor{Fiuea Rrrr}{to}
\icmlauthor{Tateu H.~Yasehe}{ed,to,goo}
\icmlauthor{Aaoeu Iasoh}{goo}
\icmlauthor{Buiui Eueu}{ed}
\icmlauthor{Aeuia Zzzz}{ed}
\icmlauthor{Bieea C.~Yyyy}{to,goo}
\icmlauthor{Teoau Xxxx}{ed}
\icmlauthor{Eee Pppp}{ed}
\end{icmlauthorlist}

\icmlaffiliation{to}{Department of Computation, University of Torontoland, Torontoland, Canada}
\icmlaffiliation{goo}{Googol ShallowMind, New London, Michigan, USA}
\icmlaffiliation{ed}{School of Computation, University of Edenborrow, Edenborrow, United Kingdom}

\icmlcorrespondingauthor{Cieua Vvvvv}{c.vvvvv@googol.com}
\icmlcorrespondingauthor{Eee Pppp}{ep@eden.co.uk}

\icmlkeywords{Machine Learning, ICML}

\vskip 0.3in
]

\section{Proofs}
\begin{proof}[Proof of Lemma ]
Let $\cal{T}_I$ be the non-adaptive tree with partitions at dyadic values along the main axes. $\cal{T}_I$ partitions $\tilde{\Omega}$ into $2^{2j}$ dyadic cubes of side length $2^{-2j+1}$ and area $2^{-2j}$ on level $2j$. Since we are in the binary setting, the output range is the interval $[0,1]$ and for $\Omega\in\cal{T}$, $\vec{E_{\Omega}}$ is a scalar.

\begin{aligned}

N_{\tau, p} \left( {f,{\cal T}} \right) &:= \left( {\sum\limits_{\Omega' \ne \tilde{\Omega}, \Omega' \in {\cal 
T}, {\partial{\tilde{\Omega}}}\cap{\Omega'} \ne {\emptyset}} {\left\| {\psi_{\Omega' } } \right\|_{p}^{\tau } } } \right)^{1/\tau } & =\left( {\sum\limits_{\Omega' \ne \tilde{\Omega}, \Omega' \in {\cal 
T}, {\partial{\tilde{\Omega}}}\cap{\Omega'} \ne {\emptyset}} {\left\| {{{E}}_{{\Omega}'} -{{E}}_{\Omega 
}} \right\|_{l_2}^{\tau } } |\Omega'|^{\frac{\tau}{p}}} \right)^{1/\tau } &= \left( {\sum\limits_{l(\Omega')=j, j \ge 0, {\partial{\tilde{\Omega}}}\cap{\Omega'} \ne {\emptyset}} {\left\| {{{E}}_{{\Omega}'} -{{E}}_{\Omega 
}} \right\|_{l_2}^{\tau } } |\Omega'|^{\frac{\tau}{p}}} \right)^{1/\tau }

\end{aligned}

Let $\Omega'\in\cal{T}$ be in level $2j$, and $\Omega$ the parent. Since $E_{\Omega}, E_{\Omega'}\in[0,1]$, $\|{{E}}_{\Omega'} -{E}_{\Omega}\|_{l_2} \le 1$. Further, $|\Omega'|^{\frac{\tau}{p}}=(2^{-2j})^{\frac{\tau}{p}}=2^{\frac{-2j\tau}{p}}.$ Which yields: 

\begin{align*}
     \|{{E}}_{\Omega'} -{E}_{\Omega}\|_{l_2}^{\tau } |\Omega'|^{\frac{\tau}{p}} \le 2^{\frac{-2j\tau}{p}}
\end{align*}

Let $\Omega'\in\cal{T}$ be in level $2j+1$, and $\Omega$ the parent. For each such $\Omega'$, $|\Omega'|=\frac{|\Omega|}{2}$. We notice $|\{\Omega':l(\Omega')=2j+1, \Omega'\cap\tilde{\Omega} \ne \emptyset\}| \le 2|\{\Omega':l(\Omega')=2j, \Omega'\cap\tilde{\Omega} \ne \emptyset\}|$. Therefore:
\begin{aligned}

{\sum\limits_{l(\Omega')=2j+1, {\partial{\tilde{\Omega}}}\cap{\Omega'} \ne {\emptyset}} {\left\| {{{E}}_{{\Omega}'} -{{E}}_{\Omega }} \right\|_{l_2}^{\tau } } |\Omega'|^{\frac{\tau}{p}}}&\le{\sum\limits_{l(\Omega')=2j+1} {1^\tau} |\Omega'|^{\frac{\tau}{p}}}|\{\Omega':l(\Omega')=2j+1, \Omega'\cap\tilde{\Omega} \ne \emptyset\}|&\le{\sum\limits_{l(\Omega')=2j} {1^\tau} |\Omega'|^{\frac{\tau}{p}}}|\{\Omega':l(\Omega')=2j, \Omega'\cap\tilde{\Omega} \ne \emptyset\}|

\end{aligned}
We can then look at only the even layers. We also notice that for $\Omega'\in\cal{T}$ in level $2j$, $|\{\Omega':l(\Omega')=2j, \Omega'\cap\tilde{\Omega} \ne \emptyset\}|\le2^j$.

\begin{aligned}

C\left( {\sum\limits_{l(\Omega')=2j, j \ge 0} {\left\| {{{E}}_{{\Omega}'} -{{E}}_{\Omega 
}} \right\|_{l_2}^{\tau } } |\Omega'|^{\frac{\tau}{p}}} \right)^{1/\tau } \le C\left( {\sum\limits_{j \ge 0} {1^\tau} |\Omega'|^{\frac{\tau}{p}}}|\{\Omega':l(\Omega')=2j, \Omega'\cap\tilde{\Omega} \ne \emptyset\}| \right)^{1/\tau } &\le C({\sum\limits_{j \ge 0} {2^{\frac{-2j\tau}{p}+j}}})^{1/\tau }

\end{aligned}
and so, 
\begin{aligned}
N_{\tau, p}<\infty \Leftrightarrow \frac{-2\tau}{p}+1<0 \Leftrightarrow \frac{p}{2}<\tau
\end{aligned}

\end{proof}

\section{Experiment Settings}